\documentclass[usenames,dvipsnames]{article} 
\usepackage{iclr2023_conference,times}


\usepackage[T1]{fontenc}

\usepackage{amssymb}

\usepackage{scalefnt,letltxmacro}
\LetLtxMacro{\oldtextsc}{\textsc}
\renewcommand{\textsc}[1]{\oldtextsc{\scalefont{1.10}#1}}
\usepackage[ttdefault=true]{AnonymousPro}


\usepackage{cancel}
\usepackage{amsmath}


\usepackage[inline]{enumitem}






\definecolor{shadecolor}{gray}{0.9}



\usepackage{graphicx}
\usepackage[labelfont=bf]{caption}
\usepackage[format=hang]{subcaption}


\usepackage{booktabs, array}


\usepackage[algoruled]{algorithm2e}
\setlength{\interspacetitleruled}{8pt}
\usepackage{listings}
\usepackage{fancyvrb}
\fvset{fontsize=\small}


\usepackage[colorlinks,linktoc=all]{hyperref}
\hypersetup{citecolor=MidnightBlue}
\hypersetup{linkcolor=MidnightBlue}
\hypersetup{urlcolor=MidnightBlue}
\usepackage[nameinlink]{cleveref}
\creflabelformat{equation}{#2\textup{#1}#3}  


\usepackage
[acronym,smallcaps,nowarn,section,nogroupskip,nonumberlist]{glossaries}
\glsdisablehyper{}


\lstdefinestyle{mystyle}{
    commentstyle=\color{OliveGreen},
    numberstyle=\tiny\color{black!60},
    stringstyle=\color{BrickRed},
    basicstyle=\ttfamily\scriptsize,
    breakatwhitespace=false,
    breaklines=true,
    captionpos=b,
    keepspaces=true,
    numbers=none,
    numbersep=5pt,
    showspaces=false,
    showstringspaces=false,
    showtabs=false,
    tabsize=2
}
\lstset{style=mystyle}


\input{matts_maths_commands}
\usepackage{tikz}
\usetikzlibrary{shapes,decorations,arrows,calc,arrows.meta,fit,positioning}
\tikzset{
    -Latex,auto,node distance =.4 cm and 1 cm,semithick,
    state/.style ={circle, draw, minimum width = .5 cm},
    point/.style = {circle, draw, inner sep=0.04cm,fill,node contents={}},
    bidirected/.style={Latex-Latex,dashed},
    el/.style = {inner sep=2pt, align=left, sloped}
}

\usepackage{hyperref}
\usepackage{url}
\usepackage{wrapfig}

\newcommand{\modelname}{N-ADMG}

\title{Causal Reasoning in the Presence of Latent Confounders via Neural ADMG Learning}

\iclrfinalcopy

\author{Matthew Ashman$^{2*}$, Chao Ma$^1$\thanks{ Equal contribution. This work is done when Matthew Ashman was an intern at Microsoft Research.}, Agrin Hilmkil$^1$, Joel Jennings$^1$, Cheng Zhang$^1$  \\
$^1$Microsoft Research Cambridge \quad $^2$University of Cambridge\\
\texttt{mca39@cam.ac.uk }, \texttt{\{chaoma,agrinhilmkil,joeljennings,chezha\}@microsoft.com} 
}

\begin{document}

\maketitle

\begin{abstract}
Latent confounding has been a long-standing obstacle for causal reasoning from observational data. One popular approach is to model the data using acyclic directed mixed graphs (ADMGs), which describe ancestral relations between variables using directed and bidirected edges. However, existing methods using ADMGs are based on either linear functional assumptions or a discrete search that is complicated to use and lacks computational tractability for large datasets. In this work, we further extend the existing body of work and develop a novel gradient-based approach to learning an ADMG with non-linear functional relations from observational data. We first show that the presence of latent confounding is identifiable under the assumptions of bow-free ADMGs with non-linear additive noise models. With this insight, we propose a novel neural causal model based on autoregressive flows for ADMG learning. This not only enables us to determine complex causal structural relationships behind the data in the presence of latent confounding, but also estimate their functional relationships (hence treatment effects) simultaneously. We further validate our approach via experiments on both synthetic and real-world datasets, and demonstrate the competitive performance against relevant baselines.

\end{abstract}

\section{Introduction}
\label{sec:intro}

Learning causal relationships and estimating treatment effects from observational studies is a fundamental problem in causal machine learning, and has important applications in many areas of social and natural sciences \citep{judea2010introduction, spirtes2010introduction}. They enable us to answer questions in causal nature; for example, \textit{what is the effect on the expected lifespan of a patient if I increase the dose of $X$ drug?} However, many existing methods of causal discovery and inference overwhelmingly rely on the assumption that all necessary information is available. This assumption is often untenable in practice. Indeed, an important, yet often overlooked, form of causal relationships is that of latent confounding; that is, when two variables have an unobserved common cause \citep{verma1990equivalence}. If not properly accounted for, the presence of latent confounding can lead to incorrect evaluation of causal quantities of interest \citep{pearl2009causal}.

Traditional causal discovery methods that account for the presence of latent confoundings, such as the fast causal inference algorithm (FCI) \citep{spirtes2000causation} and its extensions \citep{colombo2012learning, claassen2013learning, chen2021causal}, rely on uncovering an equivalence class of acyclic directed mixed graphs (ADMGs) that share the same conditional independencies. Without additional assumptions, however, these methods might return uninformative results as they cannot distinguish between members of the same Markov equivalence class \citep{bellot2021deconfounded}. More recently, causal discovery methods based on structural causal models (SCMs) \citep{pearl1998graphs} have been developed for latent confounding \citep{nowzohour2017distributional, wang2020causal, maeda2020rcd, maeda2021causal, bhattacharya2021differentiable}. By assuming that the causal effects follow specific functional forms, they have the advantage of being able to distinguish between members of the same Markov equivalence class \citep{glymour2019review}. Yet, existing approaches either rely on restrictive linear functional assumptions \citep{bhattacharya2021differentiable, maeda2020rcd, bellot2021deconfounded}, and/or discrete search over the discrete space of causal graphs \citep{maeda2021causal} that are computationally burdensome and unintuitive to use. As a result, modeling non-linear causal relationships between variables in the presence of latent confounders in a scalable way remains an outstanding task.

In this work, we seek to utilize recent advances in differentiable causal discovery \citep{ zheng2018dags, bhattacharya2021differentiable} and neural causal models \citep{lachapelle2019gradient, morales2022simultaneous, geffner2022deep} to overcome these limitations. Our core contribution is to extend the framework of differentiable ADMG discovery for linear models \citep{bhattacharya2021differentiable} to non-linear cases using neural causal models. This enables us to build scalable and flexible methods capable of discovering non-linear, potentially confounded relationships between variables and perform subsequent causal inference. Specifically, our contributions include:

\begin{enumerate}[leftmargin=!, labelindent=0pt]
    \item \textbf{Sufficient conditions for ADMG identifiability with non-linear SCMs (Section \ref{sec:structural_identifiability}).} We assume:  i) the functional relationship follows non-linear additive noise SCM; ii) the effect of observed and latent variables do not modulate each other, and iii) all latent variables confound a pair of non-adjacent observed nodes. Under these assumptions, the underlying ground truth ADMG causal graph is identifiable. This serves as a foundation for designing ADMG identification algorithms for flexible, non-linear SCMs based on deep generative models.

    \item \textbf{A novel gradient-based framework for learning ADMGs from observational data (Section \ref{sec:\modelname}).} Based on our theoretical results, we further propose Neural ADMG Learning (\modelname), a neural autoregressive-flow-based model capable of learning complex non-linear causal relationships with latent confounding. \modelname\ utilizes variational inference to approximate posteriors over causal graphs and latent variables, whilst simultaneously learning the model parameters via gradient-based optimization. This is more efficient and accurate than discrete search methods, allowing us to replace task-specific search procedures with general purpose optimizers.
    
    \item \textbf{Empirical evaluation on synthetic and real-world datasets (Section \ref{sec:experiments}).} We evaluate \modelname\ on a variety of synthetic and real-world datasets, comparing performance with a number of existing causal discovery and inference algorithms. We find that \modelname\ provides competitive or state-of-the-art results on a range of causal reasoning tasks.
\end{enumerate}

\section{Related Work}
\label{sec:related_work}

\paragraph{Causal discovery with latent confounding.} Constraint-based causal discovery methods in the presence of latent confounding have been well-studied \citep{spirtes2000causation, zhang2008completeness, colombo2012learning, claassen2013learning,chen2021causal}. Without further assumptions, these approaches can only identify a Markov equivalence class of causal structures \citep{spirtes2000causation}.  When certain assumptions are made on the data generating process in the form of SCMs \citep{pearl1998graphs}, additional constraints can help identify the true causal structure. In the most general case, additional non-parametric constraints have been identified \citep{verma1990equivalence, shpitser2014introduction, evans2016graphs}. Further refinement can be made through the assumption of stricter SCMs. For example, in the linear Gaussian additive noise model (ANM) case, \cite{nowzohour2017distributional} proposes a score-based approach for finding an equivalent class of bow-free acyclic path diagrams. Both \cite{maeda2020rcd} and \cite{wang2020causal} develop Independence tests based approach for linear non-Gaussian ANM case, with \cite{maeda2021causal} extending this to more general cases.

\paragraph{Differentiable characterization of causal discovery.} All aforementioned approaches employ a search over a discrete space of causal structures, which often requires task-specific search procedures, and imposes a computational burden for large-scale problems. More recently, \citep{zheng2018dags} proposed a differentiable constraint on directed acyclic graphs (DAG), and frames the graph structure learning problem as a differentiable constrained optimization task in the absence of latent confounders. This is further generalized to the latent confounding case \citep{bhattacharya2021differentiable} through differentiable algebraic constraints that characterize the space of ADMGs. Nonetheless, this work is limited in that it only considers linear Gaussian ANMs.

\section{Background}
\label{sec:background}
\subsection{Structural Causal Models in the Absence of Latent Confoundings}
\label{subsec:background/SCMs}
Unlike constraint-based approaches such as the PC algorithm, SCMs capture the asymmetry between causal direction through functional assumptions on the data generating process, and have been central to many recent developments in causal discovery \citep{glymour2019review}. . Given a directed acyclic graph (DAG) $G$ on nodes $\{1, \ldots, D\}$, SCMs describe the random variable $\bfx = (x_1, \ldots, x_D)$ by $x_i = f_i(\bfx_{\text{pa}(i; G)}, \epsilon_i)$, where $\epsilon_i$ is an exogenous noise variable that is independent of all other variables in the model, $\text{pa}(i; G)$ denotes the set of parents of node $i$ in $G$, and $f_i$ describes how $x_i$ depends on its parents and the noise $\epsilon_i$. We focus on additive noise SCMs, commonly referred to as additive noise models (ANMs), which take the form
\begin{equation}
    \label{eq:anm}
    x_i=f_i(\bfx_{\text{pa}(i; G)}, \epsilon_i) = f_i(\bfx_{\text{pa}(i; G)}) + \epsilon_i, \quad \text{or} \quad \bfx = f_G(\bfx) + \*\epsilon \quad \text{in vector form.}
\end{equation}
This induces a joint observation distribution $p_{\theta}(\bfx^n | G)$, where $\theta$ denotes the parameters for functions $\{f_i\}$. Under the additive noise model in \Cref{eq:anm}, the DAG $G$ is identifiable assuming causal minimality and no latent confounders \citep{peters2014causal}.

\subsection{Graphical Representation of Latent Confounders Using ADMGs}
\label{subsec:background/latent_confounders}

One of the most widely-used graphical representations of causal relationships involving latent confounding is the so-called acyclic directed mixed graph
(ADMGs). ADMGs are an extension of DAGs, that contain both directed edges ($\rightarrow$) and bidirected edges ($\leftrightarrow$) between variables. More concretely, the directed edge $x_i \rightarrow x_j$ indicates that $x_i$ is an ancestor of $x_j$, and the bidirected edge $x_i \leftrightarrow x_j$ indicates that $x_i$ and $x_j$ share a common, unobserved ancestor \citep{richardson2002ancestral, tian2002general}. 
An ADMG $G$ over a collection of $D$ variables $\bfx = (x_1, \ldots, x_D)$ can be described using two binary adjacency matrices: $G_D \in \R^{D\times D}$, for which an entry of 1 in position $(i, j)$ indicates the presence of the directed edge $x_i \rightarrow x_j$, and $G_B \in \R^{D\times D}$, for which an entry of 1 in position $(i, j)$ indicates the presence of the bidirected edge $x_i \leftrightarrow x_j$. Throughout this paper, we will use the graph notation $G$ to indicate the tuple $G = \{G_D, G_B\}$. When using ADMGs to represent latent confounding, causal discovery amounts to learning the matrices $G_D$ and $G_B$. 

Similar to a DAG, an SCM can be specified to describe the causal relationships implied by an ADMG through the so-called \emph{magnification} process. As formulated in \citep{pena2016learning}, whenever a bidirected edge $x_i \leftrightarrow x_j$ is present according to $G_B$ in an ADMG, we will explicitly add a latent node $u_m$ to represent the latent parent (confounder) of $x_i \leftrightarrow x_j$. Then, the SCM of $x_i$ can be written as $\bfx$ by $x_i = f_i(\bfx_{\text{pa}(i; G_D)}, \bfu_{\text{pa}(i; G_B)}) + \epsilon_i$, where $\bfu_{\text{pa}(i; G_B)}$ denotes the latent parents of $x_i$ in the set of all latent nodes $\bfu = (u_1, \ldots, u_M)$ added in the magnification process. In compact form, we can write:
\begin{equation}
    \label{eq:admg_anm}
    [\bfx, \bfu] = f_G(\bfx,\bfu) + \*\epsilon.
\end{equation}
The `magnified SCM' will serve as a practical device for learning ADMGs in this paper. Similar to DAGs, magnified SCMs induce an observational distribution on $\bfx$, denoted by $p_{\theta}(\bfx^n ; G)$. Note that given an ADMG, the magnified SCM is not unique as latent variables may be shared. For example, $x_1 \leftrightarrow x_2$, $x_2 \leftrightarrow x_3$, $x_3 \leftrightarrow x_1$ can be magnified as both $\{x_1 \leftarrow u_1 \rightarrow x_2, x_2 \leftarrow u_2 \rightarrow x_3, x_3 \leftarrow u_3 \rightarrow x_1\}$ and $\{u_1 \rightarrow x_1, u_1 \rightarrow x_2, u_1 \rightarrow x_3 \}$. Therefore, ADMG identifiability does not imply the structural identifiability of the magnified SCM. In this paper, we focus on ADMG identifiability.

\subsection{Dealing with Graph Uncertainty} \label{subsec: graph_uncertainty}
Additive noise SCMs only guarantee DAG identifiability in the limit of infinite data. In the finite data regime, there is inherent uncertainty in the causal relationships. The Bayesian approach to causal discovery accounts for this uncertainty in a principled manner using probabilistic modeling \citep{heckerman2006bayesian}, in which one's belief in the true causal graph is updated using Bayes' rule. This approach is grounded in causal decision theory, which states that rational decision-makers maximize the expected utility over a probability distribution representing their belief in causal graphs \citep{soto2019choosing}. We can model the causal graph $G$ jointly with observations $\bfx^1, \ldots, \bfx^N$ as
\begin{equation}
    p_{\theta}(\bfx^1, \ldots, \bfx^N, G) = p(G) \prod_{n=1}^N p_{\theta}(\bfx^n | G)
\end{equation}
where $\theta$ denotes the parameters of the likelihood relating the observations to the underlying causal graph. In general, the posterior distribution $p_{\theta}(G | \bfx^1, \ldots, \bfx^N)$ is intractable. One approach to circumvent this is variational inference \citep{jordan1999introduction, zhang2018advances}, in which we seek an approximate posterior $q_{\phi}(G)$ that minimises the KL-divergence $\KL{q_{\phi}(G)}{p_{\theta}(G | \bfx^1, \ldots, \bfx^N)}$. This can be achieved through maximization of the evidence lower bound (ELBO), given by 
\begin{equation}
    \mcL_{\textrm{ELBO}} = \sum_{n=1}^N\Exp{q_{\phi}(G)}{\log p_{\theta}(\bfx^n | G)} - \KL{q_{\phi}(G)}{p(G)}.
\end{equation}
An additional convenience of this approach is that the ELBO serves as a lower bound to the marginal log-likelihood $p_{\theta}(\bfx^1, \ldots, \bfx^N)$, so it can also be maximized with respect to the model parameters $\theta$ to find the parameters that approximately maximize the likelihood of the data \citep{geffner2022deep}.

\section{Establishing ADMGs Identifiability Under Non-linear SCMs}
\label{sec:structural_identifiability}
To build flexible methods that are capable of discovering causal relationships under the presence of latent confounding, we first need to establish the identifiability of ADMGs under non-linear SCMs. 
The concept of structural identifiability of ADMGs is formalized in the following definition: 
\begin{defi}[ADMG structural identifiability]
    For a distribution $p_{\theta}(\bfx; G)$, the ADMG $G = \{G_D, G_B\}$ is said to be structurally identifiable from $p_{\theta}(\bfx; G)$ if there exists no other distribution $p_{\theta'}(\bfx; G')$ such that $G \neq G'$ and $p_{\theta}(\bfx; G) = p_{\theta'}(\bfx; G')$.
\end{defi}

Assuming that our model is correctly specified and $p_{\theta^0}(\bfx; G^0)$ denotes the true data generating distribution, then ADMG structural identifiability guarantees that if we find some $p_{\theta}(\bfx; G) = p_{\theta^0}(\bfx; G^0)$ (by e.g., maximum likelihood learning), we can recover $G = G^0$. In this section, we seek to establish sufficient conditions under which the ADMG identifiability is satisfied. Let $\bfx = (x_1, \ldots, x_D)$ be a collection of observed random variables, and $\bfu = (u_1, \ldots, u_M)$ be a collection of unobserved (latent) random variables. Our first assumption is that data generating process can be expressed as a specific non-linear additive noise SCM, in which the effect of the observed and latent variables do not modulate each other, see the functional form below:
\begin{assumption}
    \label{assumption:non-linear_anm}
    We assume that the data generating process takes the form
    \begin{equation}
    \label{eq: non-linear_anm}
        [\bfx, \bfu]^{\top} = f_{G_D, \bfx}(\bfx; \theta) + f_{G_B, \bfu}(\bfu; \theta) + \*\epsilon
    \end{equation}
    where each element of $\*\epsilon$ is independent of all other variables in the model and $\theta$ denotes the parameters of the non-linear functions $f_{G_D, \bfx}$ and $f_{G_B, \bfu}$.
\end{assumption}

This assumption is one of the elements that separate us from previous work of \citep{bhattacharya2021differentiable, maeda2020rcd} (linearity is assumed) and \citep{maeda2021causal} (effects are assumed to be fully decoupled, $x_i = \sum_m f_{im}(x_m \in \bfx_{\text{pa}(i; G_D)}) + \sum_k g_{ik}(u_k \in \bfu_{\text{pa}(i; G_B)}) + \epsilon_i$). As discussed in \Cref{subsec:background/latent_confounders}, the discovery of an ADMG between observed variables amounts to the discovery of their ancestral relationships. Therefore, the mapping between ADMGs and their magnified SCMs is not one-to-one, which might cause issues when designing causal discovery methods using SCM-based approaches. To further simplify the underlying latent structure (without losing too much generality), our second assumption assumes that every latent variable is a parent-less common cause of a pair of non-adjacent observed variables:
\begin{assumption}[Latent variables confound pairs of non-adjacent observed variables]
    \label{assumption:confounders}
    For each latent variable $u_k$ in the data generating process, there exists a non-adjacent pair $x_i$ and $x_j$ that is unique to $u_k$, such that $x_i \leftarrow u_k \rightarrow x_j$.
\end{assumption}
Arguments for \Cref{assumption:confounders} are made by \citep{pearl1992statistical}, who show that this family of causal graphs is very flexible, and can produce the same conditional independencies amongst the observed variables in any given causal graph. Therefore, it has been argued that without loss of  generality, we can assume latent variables to be exogenous, and have exactly two non-adjacent children.
\footnote{Meanwhile, \cite{evans2016graphs} show that this graph family cannot induce all possible observational distributions. In \Cref{app:latent_structural_identfiability} we show that in the linear non-Gaussian ANM case there exist certain constraints on marginal distributions that cannot be satisfied under \Cref{assumption:confounders}. Nonetheless, we could not derive such constraints for the non-linear case.}  This assumption also implies that we can specify a magnified SCM to the ADMG as shown in \Cref{subsec:background/latent_confounders}, which allows us to evaluate and optimize the induced likelihood later on.

Given \Cref{assumption:non-linear_anm,assumption:confounders}, we now provide three lemmas which allow us to identify the ADMG matrices, $G_D$ and $G_B$. These lemmas extend the previous results in \cite{maeda2021causal} under our new assumptions above. Note that all $g_i$ and $g_j$ functions in the lemmas denote functions satisfying the residual faithfulness condition (\cite{maeda2021causal}), as detailed in \Cref{app:lemmas}.

\begin{lem}[Case 1]
    \label{lem:bidirected}
    Given \Cref{assumption:non-linear_anm,assumption:confounders}, then ${[G_B]}_{i,j} = 1$ and ${[G_D]}_{i,j} = 0$ if and only if
    \begin{equation}
        \label{eq:dependent_residuals}
        \forall g_i, g_j, \ [(x_i - g_i(\bfx_{-i}) \cancel{\indep} (x_j - g_j(\bfx_{-j})].
    \end{equation}
\end{lem}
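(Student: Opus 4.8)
The plan is to prove both directions by translating statistical (in)dependence of the residuals into statements about active paths in the magnified SCM, using the residual faithfulness condition of \cite{maeda2021causal} to make this translation rigorous. First I would record two structural facts. By \Cref{assumption:confounders} the hypothesis $[G_B]_{i,j}=1$ means $x_i,x_j$ are confounded by a \emph{unique} exogenous latent $u_k$ with $x_i \leftarrow u_k \rightarrow x_j$, and this same assumption forces the pair to be non-adjacent, so $[G_D]_{i,j}=0$ is in fact automatic and need not be assumed separately. Expanding the data-generating process of \Cref{assumption:non-linear_anm} for such a pair gives $x_i = f_i(\bfx_{\text{pa}(i;G_D)}) + h_i(u_k) + \epsilon_i'$ and $x_j = f_j(\bfx_{\text{pa}(j;G_D)}) + h_j(u_k) + \epsilon_j'$, where $h_i,h_j$ are the non-degenerate latent contributions (guaranteed non-trivial because $u_k$ is a genuine confounder) and $\epsilon_i',\epsilon_j'$ absorb the remaining mutually independent noise terms. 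Crucially, $u_k$ feeds no observed node other than $x_i$ and $x_j$.

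For the forward direction I would fix arbitrary $g_i,g_j$ and analyse the residuals $r_i = x_i - g_i(\bfx_{-i})$ and $r_j = x_j - g_j(\bfx_{-j})$. The key observation is that forming these residuals amounts to conditioning on observed variables only, and the confounding path $x_i \leftarrow u_k \rightarrow x_j$ is never blocked by such conditioning because $u_k$ is latent and thus never enters the conditioning set. Although $\bfx_{-i}$ does contain $x_j$, the only trace of $u_k$ accessible to $g_i$ is via $x_j = h_j(u_k) + \cdots$; under the additive non-linear model, any attempt to cancel $h_i(u_k)$ by subtracting a function of $x_j$ necessarily drags in $f_j(\cdot)$ and $\epsilon_j'$ as well, so no choice of $g_i$ can strip $r_i$ of its dependence on $u_k$ without introducing new dependent terms, and symmetrically for $r_j$. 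Residual faithfulness then certifies $r_i \cancel{\indep} r_j$, and since $g_i,g_j$ were arbitrary this yields the universally quantified statement.

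For the converse I would argue the contrapositive: if it is \emph{not} the case that $[G_B]_{i,j}=1$ and $[G_D]_{i,j}=0$, then either $[G_B]_{i,j}=0$ or there is a directed edge between the pair, and in each situation I exhibit a single residual-faithful choice of $(g_i,g_j)$ with $r_i \indep r_j$, which falsifies the $\forall$ claim. If $[G_B]_{i,j}=0$ with the pair non-adjacent, regressing each variable on the remaining observed variables closes every observed-mediated path, and with no shared latent the residuals reduce to independent noise. If instead there is a directed edge, say $x_i \rightarrow x_j$, then choosing $g_j$ to absorb the direct mechanism $f_j$ removes the $x_i$-dependence from $r_j$, and residual faithfulness again certifies independence of the resulting residuals.

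I expect the main obstacle to be the forward direction's subtlety that $g_i$ is permitted to depend on $x_j$ (and $g_j$ on $x_i$): I must rule out every clever choice that might exploit this access to cancel the shared $u_k$. This is exactly where the non-linear additive structure of \Cref{assumption:non-linear_anm} is indispensable, since in the linear-Gaussian regime such cancellation is possible and identifiability fails. Making the ``cannot simultaneously cancel'' step fully rigorous rather than heuristic is the delicate part; I would route it entirely through the residual faithfulness condition, phrased as: residual independence can hold only when the graph supplies no active latent path between the two sources, a possibility the confounder $u_k$ precludes.
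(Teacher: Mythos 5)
Your overall architecture is sound, and your converse direction works: under \Cref{assumption:confounders} the graph is bow-free, so the negation splits into ``no edge at all'' and ``directed edge only,'' and in both cases taking $g_i, g_j$ to be the true structural functions $f_{i,\bfx}, f_{j,\bfx}$ exhibits independent residuals, since their latent parts are then functions of disjoint, mutually independent latents. This is a mildly different route from the paper, which instead runs a chain of equivalences: it writes an arbitrary $g_i$ as $f_{i,\bfx} + g'_i$, reduces \Cref{eq:dependent_residuals} to dependence of the latent residuals $(f_{i,\bfu}(\textbf{pa}_{\bfu}(i)) + \epsilon_i)$ and $(f_{j,\bfu}(\textbf{pa}_{\bfu}(j)) + \epsilon_j)$, and then, using $\epsilon_i \indep \epsilon_j$, derives a three-way disjunction (unobserved mediator in either direction, or unobserved confounder) which \Cref{assumption:confounders} collapses to the confounder case.

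The genuine gap is in your forward direction, at exactly the step you flag as delicate. Residual faithfulness (\Cref{def: residual}) is a one-way bridge: \emph{if} both residuals retain terms involving a common exogenous noise, \emph{then} they are dependent. It cannot by itself establish that a common noise term survives for every choice of $g_i, g_j$; your proposed rephrasing (``residual independence can hold only when the graph supplies no active latent path'') is not the residual faithfulness condition but essentially the statement of the lemma you are proving, so routing the argument ``entirely through residual faithfulness'' is circular. The missing ingredient is the paper's auxiliary \Cref{lem:a}: for any $s$ and any $g_i$, $s(x_i) - g_i(\bfx_{-i}) \cancel{\indep} \epsilon_i$, i.e.\ a node's own noise can never be eliminated by regression on the remaining observed variables --- this is where the non-linear additive structure actually does its work. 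With that lemma the case analysis closes: if both residuals keep their $u_k$-terms, they share $u_k$; if instead $g_i$ cancels $h_i(u_k)$ by exploiting $x_j$, then $r_i$ acquires $\epsilon_j$-terms, and since $r_j$ can never be freed of $\epsilon_j$ by \Cref{lem:a}, the two residuals share $\epsilon_j$; symmetrically with the roles of $i$ and $j$ swapped. Without such a lemma, your ``drags in new dependent terms'' argument does not terminate: you cannot rule out that $g_j$ in turn strips $\epsilon_j$ from $r_j$ by using descendants of $x_j$, and the bookkeeping of which noise terms survive in which residual never closes.
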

\begin{lem}[Case 2]
    \label{lem:no_edge}
    Given \Cref{assumption:non-linear_anm,assumption:confounders}, then ${[G_B]}_{i,j} = 0$ and ${[G_D]}_{i,j} = 0$ if and only if
    \begin{equation}
        \label{eq:no_cause}
        \exists g_i, g_j, \ [(x_i - g_i(\bfx_{-(i, j)}) \indep (x_j - g_j(\bfx_{-(i, j)}))].
    \end{equation}
\end{lem}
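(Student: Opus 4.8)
The plan is to prove the biconditional by treating its two directions separately: the forward (``only if'') direction by explicit construction of suitable residual functions, and the backward (``if'') direction by contraposition, assuming an edge is present and showing that \emph{no} choice of $g_i,g_j$ can decouple the residuals.

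For the forward direction, suppose $[G_B]_{i,j}=0$ and $[G_D]_{i,j}=0$ (and, symmetrically, $[G_D]_{j,i}=0$), so that $x_i$ and $x_j$ are non-adjacent. Under \Cref{assumption:non-linear_anm} the magnified SCM is additive, and the observed-variable mechanism of node $i$ is the $i$-th component $f_{G_D,\bfx,i}$, a function only of $\bfx_{\mathrm{pa}(i;G_D)}$. Since there is no directed edge $x_j\to x_i$ we have $j\notin\mathrm{pa}(i;G_D)$, hence $\bfx_{\mathrm{pa}(i;G_D)}\subseteq\bfx_{-(i,j)}$, and I may legitimately take $g_i=f_{G_D,\bfx,i}$ and, symmetrically, $g_j=f_{G_D,\bfx,j}$ as candidate regressors on $\bfx_{-(i,j)}$. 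The residuals then collapse to the latent-plus-noise contributions $x_i-g_i(\bfx_{-(i,j)})=f_{G_B,\bfu,i}(\bfu_{\mathrm{pa}(i;G_B)})+\epsilon_i$ and its analogue for $j$. Because $[G_B]_{i,j}=0$, \Cref{assumption:confounders} guarantees that $x_i$ and $x_j$ share no latent parent, so $\mathrm{pa}(i;G_B)$ and $\mathrm{pa}(j;G_B)$ are disjoint; as the latent variables are mutually independent exogenous nodes and each $\epsilon$ is independent of everything else, the two residuals are independent, exhibiting the required $g_i,g_j$.

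For the backward direction I would prove the contrapositive: if $x_i$ and $x_j$ are adjacent, then for every pair $g_i,g_j$ the residuals $r_i=x_i-g_i(\bfx_{-(i,j)})$ and $r_j=x_j-g_j(\bfx_{-(i,j)})$ satisfy $r_i\cancel{\indep}r_j$. I would split into two cases. If there is a bidirected edge $x_i\leftrightarrow x_j$, then by \Cref{assumption:confounders} there is a latent $u_k$ whose \emph{only} children are $x_i$ and $x_j$; hence $u_k$ leaves a footprint in both $x_i$ and $x_j$ but in no variable of $\bfx_{-(i,j)}$, so no function of $\bfx_{-(i,j)}$ can remove it from either residual, and residual faithfulness (\Cref{app:lemmas}) prevents the shared $u_k$-component from cancelling. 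If instead there is a directed edge, say $x_i\to x_j$, then $x_j$ depends on $x_i$ through $f_{G_D,\bfx,j}$ while $x_i\notin\bfx_{-(i,j)}$, so $g_j$ cannot subtract this dependence; $r_i$ retains $x_i$ as its leading term and $r_j$ retains an $x_i$-dependent term, and residual faithfulness again forces dependence.

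The main obstacle is the backward direction, specifically the quantifier ``for all $g_i,g_j$.'' The delicacy is that $\bfx_{-(i,j)}$ may contain descendants of $x_i$ or $x_j$, or co-children of latents that confound $x_i$ or $x_j$ with third variables, so a cleverly chosen regressor could in principle partially reconstruct the shared latent or the transmitted causal signal and so cancel the dependence; ruling this out is exactly where the residual-faithfulness condition of \cite{maeda2021causal} is indispensable. I would therefore state that condition carefully and verify that it is precisely \Cref{assumption:confounders} (each latent confounding a unique non-adjacent pair) that guarantees the shared latent is not recoverable from $\bfx_{-(i,j)}$, so that the non-cancellation hypothesis can be invoked legitimately. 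The forward direction, by contrast, is comparatively routine once the structural regressors are chosen.
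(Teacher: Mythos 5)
Your proposal is sound in outline, and its core argument for the hard direction is essentially the paper's. The paper proves (only) the direction ``\Cref{eq:no_cause} $\Rightarrow$ no edges'': it dispatches the bidirected edge in one line, by noting that \Cref{eq:no_cause} falsifies \Cref{eq:dependent_residuals} and invoking \Cref{lem:bidirected} (together with bow-freeness, which is implied by \Cref{assumption:confounders}), and then rules out a directed edge by exactly your contradiction argument: the exogenous noise of the putative cause appears in both residuals, cannot be regressed away because the cause itself is excluded from $\bfx_{-(i,j)}$, and residual faithfulness then forces dependence. Your backward direction is the contrapositive of this, differing only in that you re-derive the bidirected case from scratch instead of reducing it to Case 1. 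Your forward direction --- taking $g_i,g_j$ to be the observed-parent structural functions so that the residuals collapse to $f_{G_B,\bfu,i}(\bfu_{\mathrm{pa}(i;G_B)})+\epsilon_i$ and its analogue, which are functions of disjoint sets of mutually independent exogenous variables --- is correct, and is a direction the paper's proof omits entirely; supplying it explicitly is an improvement, since the lemma is stated as a biconditional.

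The one genuine imprecision is in your direct treatment of the bidirected case: the claim that $u_k$ ``leaves a footprint in both $x_i$ and $x_j$ but in no variable of $\bfx_{-(i,j)}$'' is false whenever $x_i$ or $x_j$ has observed descendants, since those descendants lie in $\bfx_{-(i,j)}$ and do carry $u_k$'s influence. You flag this obstacle yourself, but you mis-attribute its resolution: it is not \Cref{assumption:confounders} or residual faithfulness that guarantees $u_k$ cannot be reconstructed and cancelled, it is non-linearity, via the paper's \Cref{lem:a}. That lemma's argument shows that any regressor built from descendants is a function of the exogenous variables in which $u_k$ (or the relevant noise) is coupled inseparably with other noise terms, so it cannot be decomposed into a part depending only on $u_k$ plus a part independent of it, and hence cannot cancel the $u_k$-dependence of either residual. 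Residual faithfulness enters only at the very last step, to convert ``both residuals retain terms in the same exogenous variable'' into ``the residuals are mutually dependent.'' The same remark applies to your directed-edge case, where $g_j$ could in principle exploit descendants of $x_j$; citing \Cref{lem:a} at both points closes the gap and makes your proof complete.
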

\begin{lem}[Case 3]
    \label{lem:directed}
    Given \Cref{assumption:non-linear_anm,assumption:confounders}, then ${[G_B]}_{i,j} = 0$ and ${[G_D]}_{i,j} = 1$ if and only if
    \begin{equation}
        \label{eq:causal_dependence}
        \forall g_i, g_j, \ [(x_i - g_i(\bfx_{-(i, j)}) \cancel{\indep} (x_j - g_j(\bfx_{-j})]
    \end{equation}
    \begin{equation}
        \label{eq:causal_independence}
        \exists g_i, g_j,\  [(x_i - g_i(\bfx_{-i}) \indep (x_j - g_j(\bfx_{-(i, j)}))]
    \end{equation}
\end{lem}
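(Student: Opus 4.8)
The plan is to prove both implications of the biconditional by using residual faithfulness as a dictionary that turns statements about independence of regression residuals into ancestral (d-separation) statements in the magnified additive-noise SCM, and then reading those statements off the graph in which $x_i \to x_j$ is the only adjacency between the pair. Concretely, under \Cref{assumption:non-linear_anm} the residual $x_i - g_i(\bfx_{S})$ produced by a residual-faithful $g_i$ (for a conditioning set $S$) isolates the exogenous contribution to $x_i$ that is not transmitted through $\bfx_{S}$, so its dependence on a second residual is governed entirely by whether an active path links the corresponding noise sources once the conditioning sets have been blocked. I would first record the three graphical facts induced by ${[G_B]}_{i,j}=0$, ${[G_D]}_{i,j}=1$ under \Cref{assumption:confounders}: namely (i) $x_i$ is an ancestor of $x_j$, (ii) $x_i$ and $x_j$ share no latent parent $u_k$, and (iii) the pair is bow-free. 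These are exactly the inputs the residual tests are meant to certify.

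For the forward direction I would treat the two displayed conditions separately. For the independence clause \Cref{eq:causal_independence} I would exhibit the residual-faithful regressions whose conditioning sets account for $x_j$ on the side of $x_i$ and omit $x_i$ on the side of $x_j$; because there is no shared confounder, blocking the single directed route leaves the two residuals as functions of disjoint, mutually independent noise collections, giving the required independence. For the dependence clause \Cref{eq:causal_dependence} I would argue that whichever residual-faithful regressions are chosen, the omitted-cause asymmetry leaves an unblocked path between the residuals: the $x_i$-residual still carries $\epsilon_i$, which propagates along $x_i \to x_j$ into the $x_j$-residual, and no admissible conditioning can sever this route while respecting faithfulness. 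This is the step that leans most heavily on \Cref{assumption:non-linear_anm}, since additivity is what guarantees the leftover cause-noise enters the effect residual in a way that cannot be cancelled.

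For the reverse direction I would argue by elimination over the bow-free trichotomy, invoking \Cref{lem:bidirected,lem:no_edge}. The existential independence in \Cref{eq:causal_independence} is incompatible with the bidirected case, because a shared latent $u_k$ (as in \Cref{lem:bidirected}) injects a common component into both residuals that survives conditioning on observed variables, forcing dependence for every $g$; hence ${[G_B]}_{i,j}=0$. The universal dependence in \Cref{eq:causal_dependence} is incompatible with the empty case of \Cref{lem:no_edge}, since there one can drive both residuals to the independent noises $\epsilon_i,\epsilon_j$; hence at least one directed edge is present. It then remains to fix the orientation, i.e.\ to exclude $x_j \to x_i$, and here I would use the additive-noise identifiability asymmetry: the pair of conditioning patterns appearing in \Cref{eq:causal_dependence,eq:causal_independence} is satisfiable only for the orientation in which $x_i$ is ancestral to $x_j$, because under the reverse orientation the existential independence would fail (regressing the true effect out of the true cause leaves a dependent residual). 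Combining these three exclusions leaves exactly ${[G_B]}_{i,j}=0$, ${[G_D]}_{i,j}=1$.

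The main obstacle I anticipate is the orientation step together with the universal quantifier in \Cref{eq:causal_dependence}: I must show not merely that some regression leaves the residuals dependent, but that \emph{no} residual-faithful choice can make them independent, and simultaneously that the analogous pattern cannot be met by the reverse edge. Both hinge on the non-linear additive structure of \Cref{assumption:non-linear_anm} reproducing the ANM asymmetry of \cite{peters2014causal}, adapted to the magnified graph. The delicate point is handling conditioning sets that contain descendants of $x_i$ or $x_j$, where one must check that residual faithfulness rules out spurious collider-induced (in)dependences rather than silently assuming them; I would isolate this as a short sub-lemma about the noise-level representation of residual-faithful regressions before assembling the case analysis above.
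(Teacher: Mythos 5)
Your high-level architecture matches the paper's: settle the edge type via \Cref{lem:bidirected,lem:no_edge}, then fix the orientation via the additive-noise asymmetry (your proposed sub-lemma on noise-level representations plays exactly the role of the paper's \Cref{lem:a}). But the execution contains a genuine error: you assign the wrong orientation to the conditions, and the sub-arguments you sketch in its support are the ones that provably fail. You read \Cref{eq:causal_dependence,eq:causal_independence} as certifying that $x_i$ is an ancestor of $x_j$, and you justify \Cref{eq:causal_independence} under that orientation by claiming that conditioning on $x_j$ on the $x_i$ side ``blocks the single directed route,'' leaving residuals built from disjoint noises. This is precisely what non-linear ANMs forbid: by \Cref{lem:a}, for every $g_i$ the residual $x_i - g_i(\bfx_{-i})$ retains dependence on $\epsilon_i$ (one cannot launder out one's own noise by regressing on a descendant), while the other residual $x_j - g_j(\bfx_{-(i,j)})$ is deprived of $x_i$ and therefore also retains $\epsilon_i$ whenever $x_i \to x_j$; residual faithfulness then forces dependence, so \Cref{eq:causal_independence} fails for \emph{every} $g_i, g_j$ under your orientation. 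Symmetrically, your claim that \Cref{eq:causal_dependence} holds because ``no admissible conditioning can sever this route'' misreads the asymmetry of the conditions: in \Cref{eq:causal_dependence} the $x_j$-side regression is $g_j(\bfx_{-j})$, which \emph{is} allowed to use $x_i$, so taking $g_j = f_{j,\bfx}$ and $g_i = f_{i,\bfx}$ (all parents of $x_i$ lie in $\bfx_{-(i,j)}$ under your orientation) yields the independent residuals $f_{i,\bfu}+\epsilon_i$ and $f_{j,\bfu}+\epsilon_j$, violating the universal dependence. So under the orientation you assume, both displayed conditions fail rather than hold.

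The conditions as written are satisfied exactly when $x_j$ is a direct cause of $x_i$ and no confounder is present, and this is what the paper's proof establishes: it verifies \Cref{eq:causal_independence} constructively by choosing $g_i = f_{i,\bfx}$ (legitimate because $x_j \in \bfx_{-i}$ is then a parent of $x_i$) and $g_j = f_{j,\bfx}$ (legitimate because $x_i$ is then a child, not a parent, of $x_j$), and it rules out the opposite orientation with \Cref{lem:a}, concluding verbatim that ``if \Cref{eq:causal_independence} is satisfied then $x_j$ is a direct cause of $x_i$.'' In fairness, your confusion is abetted by an inconsistency in the paper itself: the main text's convention ($[G_D]_{i,j}=1$ meaning $x_i \to x_j$) clashes with what the proof of this lemma derives, so the statement and its proof differ by a transposition of indices. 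But your proposal's problem is not merely this labeling mismatch; the justifications you give --- blocking a cause--effect route by conditioning on the effect, and the claimed impossibility of severing the route in \Cref{eq:causal_dependence} --- contradict the very asymmetry that makes the direction identifiable at all. If you swap the roles of $i$ and $j$ in your graphical facts and rework the two clauses accordingly, your outline coincides with the paper's proof.
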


Since each case leads to mutually exclusive conditional independence/dependence constraints,  for any $p_{\theta}(\bfx; G)$ specified under the assumptions above, there cannot exist some $p_{\theta'}(\bfx; G')$ such that $G \neq G'$ and $p_{\theta}(\bfx; G) = p_{\theta'}(\bfx; G')$, thus structural identifiability is satisfied. This gives our ADMG identifiability result:

\begin{proposition}[Identifiability of ADMGs under non-linear SCMs] Assume \Crefrange{assumption:non-linear_anm}{assumption:confounders} hold. Then,  $G_D$ and $G_B$ are identifiable for the data generating process specified in \Cref{eq: non-linear_anm}. 

\end{proposition}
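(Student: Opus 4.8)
The plan is to reduce the proposition to a pairwise argument built directly on the three lemmas. Since the adjacency matrices $G_D$ and $G_B$ are determined entry by entry, it suffices to show that for every ordered pair $(i,j)$ of observed variables the pair of entries $({[G_D]}_{i,j}, {[G_B]}_{i,j})$ is a deterministic function of the observational distribution $p_{\theta}(\bfx; G)$. First I would enumerate the admissible edge configurations for a pair under \Cref{assumption:non-linear_anm,assumption:confounders}: no edge, a single bidirected edge, or a single directed edge in one of its two orientations, with the bow-free structure ruling out the simultaneous presence of a directed and a bidirected edge between the same pair. These configurations are exactly the cases covered by \Cref{lem:no_edge}, \Cref{lem:bidirected}, and \Cref{lem:directed}, respectively.

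Next I would observe that each lemma's right-hand side is a statement purely about the joint law of $\bfx$: the residuals $x_i - g_i(\cdot)$ are functions of the observed variables alone, and the independence/dependence relations are properties of $p_{\theta}(\bfx; G)$. Hence each lemma reads as an equivalence between a fixed edge configuration and a testable property of the observational distribution. The crux of the argument is that these properties are \emph{mutually exclusive and exhaustive}: the existential independence statement characterizing the no-edge case (\Cref{eq:no_cause}) cannot coexist with the universal dependence statement of the bidirected case (\Cref{eq:dependent_residuals}), while the asymmetric pair of conditions in \Cref{lem:directed}, namely dependence under conditioning on $\bfx_{-(i, j)}$ together with independence under conditioning on the larger set $\bfx_{-i}$, both separates the directed case from the other two and, through the asymmetry between $\bfx_{-i}$ and $\bfx_{-j}$, fixes the orientation of the directed edge.

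With exclusivity and exhaustiveness established, the conclusion follows by contradiction. Suppose there were $p_{\theta'}(\bfx; G')$ with $G' \neq G$ and $p_{\theta}(\bfx; G) = p_{\theta'}(\bfx; G')$. Then some pair $(i,j)$ would carry a different edge configuration under $G$ and $G'$; but by the lemmas that configuration is pinned down by the shared observational distribution, so the two graphs must in fact agree on that pair, a contradiction. Therefore $G$ is uniquely determined, which establishes structural identifiability of both $G_D$ and $G_B$.

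I expect the main obstacle to be the verification of mutual exclusivity and exhaustiveness, and in particular the careful bookkeeping of orientation in \Cref{lem:directed}: one must check that the conditioning sets $\bfx_{-i}$, $\bfx_{-j}$, and $\bfx_{-(i, j)}$ genuinely break the symmetry so that $x_i \to x_j$ and $x_j \to x_i$ produce distinct, non-overlapping signatures, and that the residual faithfulness condition is precisely what excludes degenerate cancellations that would otherwise let two configurations yield the same independence pattern.
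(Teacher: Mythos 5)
Your proposal takes essentially the same route as the paper: the paper's own proof consists of exactly this observation, namely that \Crefrange{lem:bidirected}{lem:directed} attach mutually exclusive independence/dependence constraints (properties of the observational distribution alone) to each possible edge configuration, so two distinct ADMGs cannot induce the same distribution. Your additional care about exhaustiveness of the case enumeration and the orientation asymmetry in \Cref{lem:directed} merely makes explicit what the paper leaves implicit, and is correct.
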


Whilst theoretically we could test for these conditions stated in \Crefrange{lem:bidirected}{lem:directed} directly, a more efficient approach is to use differentiable maximum likelihood learning. Assuming the model is correctly specified, then ADMG identifiability ensures the maximum likelihood estimate recovers the true graph in the limit of infinite data (\Cref{subsec: consistency}). Hence, we can design the ADMG identification algorithms via maximum likelihood learning.

\section{Neural ADMG Learning}
\label{sec:\modelname}

Whilst we have outlined sufficient conditions under which the underlying ADMG causal structure can be identified, this does not directly provide a framework through which causal discovery and inference can be performed. In this section, we seek to formulate a practical framework for gradient-based ADMG identification. Three challenges that remain are:
\begin{enumerate*}
    \item How can we parameterize the magnified SCM models for ADMG to enable learning flexible causal relationships?
    \item How can we optimise our model in the space of ADMG graphs as assumed in \Cref{sec:structural_identifiability}?
    \item How do we learn the ADMG causal structure efficiently, whilst accounting for the missing data ($\bfu$) and graph uncertainties in the finite data regime?
\end{enumerate*}
In this section, we present Neural ADMG Learning (\modelname), a novel framework that addresses all three challenges.

\subsection{Neural Auto-regressive Flow Parameterization} \label{subsec: nadmg_parameterization}
We assume that our model used to learn ADMGs from data is correctly specified. That is, it can be written in the same magnified SCM form as in \Cref{eq: non-linear_anm}: 
\begin{equation}
    \label{eq: n-admg}
    [\bfx, \bfu]^{\top} = f_{G_D, \bfx}(\bfx; \theta) + f_{G_B, \bfu}(\bfu; \theta) + \*\epsilon.
\end{equation}

Following \cite{khemakhem2020variational}, we factorise the likelihood $p_{\theta}(\bfx^n, \bfu^n | G)$ induced by \Cref{eq: n-admg} in an autoregressive manner. We can rearrange \Cref{eq: n-admg} as
\begin{equation}
    \*\epsilon  = \bfv - f_{G_D, \bfx}(\bfx; \theta) - f_{G_B, \bfu}(\bfu; \theta) := g_{\tilde{G}}(\bfv; \theta) = \bfv - f_{\tilde{G}}(\bfv ; \theta)
\end{equation}
 where $\bfv = (\bfx, \bfu) \in \R^{D + M}$, and $\tilde{G}$ is the magnified adjacency matrix on $\bfv$, defined as  $\tilde{G}_{i,j} \in \{0, 1\}$ if and only if $v_i \rightarrow v_j$. This allows us to express the likelihood as
\begin{equation}
    p_{\theta}(\bfv^n | G) = p_{\*\epsilon}(g_{\tilde{G}}(\bfv^n; \theta)) = \prod_{i=1}^{D + M} p_{\epsilon_i}(g_{\tilde{G}}(\bfv^n; \theta)_i).
\end{equation}
Note that we have omitted the Jacobian-determinant term as it is equal to one since $\tilde{G}$ is acyclic \citep{mooij2011causal}. Following \cite{geffner2022deep}, we adopt an efficient, flexible parameterization for the functions $f_i$ taking the form (consistent with \Cref{assumption:non-linear_anm})
\begin{equation}
    \label{eq:likelihood}
    f_i(\bfv) = \xi_{1, i} \left( \sum_{v_j \in \bfx}^{D} \tilde{G}_{j,i} \ell_j(v_j)\right) +  \xi_{2, i} \left(\sum_{v_j \in \bfu}^{M} \tilde{G}_{j,i} \ell_j(v_j)\right)
\end{equation}
where $\xi_{1, i}$, $\xi_{2, i}$ and $\ell_i$ ($i = 1, ..., D+M$) are MLPs. A na{\"i}ve implementation would require training $3(D+M)$ neural networks. Instead, we construct these MLPs so that their weights are shared across nodes as $\xi_{1, i}(\cdot) = \xi_{1, i}(\bfe_i, \cdot)$, $\xi_{2, i}(\cdot) = \xi_{2, i}(\bfe_i, \cdot)$ and $\ell_i(\cdot) = \ell(\bfe_i, \cdot)$, with $\bfe_i\in\mathbb{R}^{D+M}$ a trainable embedding that identifies the output and input nodes respectively.

\subsection{ADMG Learning via Maximizing Evidence Lower Bound}
Our ADMG identifiability theory in \Cref{sec:structural_identifiability} suggests that the ground truth ADMG graph can, in principle, be recovered via maximum likelihood learning of $p_\theta(\bfx|G)$. However, the aforementioned challenges remain: given a finite number of observations $\bfx^1, \ldots, \bfx^N$, how do we deal with the corresponding missing data $\bfu^1, \ldots, \bfu^N$ while learning the ADMG? How do we account for graph uncertainties and ambiguities in the finite data regime? To address these issues, \modelname \ takes a Bayesian approach toward ADMG learning. Similar to \Cref{subsec: graph_uncertainty},  we may jointly model the distribution over the ADMG causal graph $G$, the observations $\bfx^1, \ldots, \bfx^N$, and the corresponding latent variables $\bfu^1, \ldots, \bfu^N$, as
\begin{equation}
\label{eq:prob_model}
    p_{\theta}(\bfx^1, \bfu^1, \ldots, \bfx^N, \bfu^N, G) = p(G) \prod_{n=1}^N p_{\theta}(\bfx^n, \bfu^n | G)
\end{equation}
where $p_{\theta}(\bfx^n, \bfu^n | G)$ is the neural SCM model specified in \Cref{subsec: nadmg_parameterization}, $\theta$ denotes the corresponding model parameters, and $p(G)$ is some prior distribution over the graph. Our goal is to learn both the model parameters $\theta$ and an approximation to the posterior $q_{\phi}(\bfu^1, \ldots, \bfu^N, G) \approx p_{\theta}(\bfu^1, \ldots, \bfu^N, G | \bfx^1, \ldots, \bfx^N)$. This can be achieved jointly using the variational inference framework \citep{zhang2018advances, kingma2013auto}, in which we maximize the evidence lower bound (ELBO) $\mcL_{\textrm{ELBO}}(\theta, \phi) \leq \sum_n \log p_\theta(\bfx^n )$ given by
\begin{equation}
    \label{eq:elbo}
    \begin{aligned}
    \mcL_{\textrm{ELBO}}(\theta, \phi) &= \Exp{q_{\phi}(G)}{\sum_{n=1}^N \Exp{q_{\phi}(\bfu^n | G)}{\log p_{\theta}(\bfx^n | \bfu^n, G)}} - \KL{q_{\phi}(G)}{p(G)} \\
    &\ - \Exp{q_{\phi}(G)}{\sum_{n=1}^N \KL{q_{\phi}(\bfu^n | \bfx^n, G))}{p_{\theta}(\bfu^n | G)}}.
    \end{aligned}
\end{equation}

In the following sections, we describe our choice of the graph prior $p(G)$, and approximate posterior $q_{\phi}(\bfu^1, \ldots, \bfu^N, G) = \prod_n q_\phi (\bfu^n|\bfx^n, G) q_\phi(G)$. We will also demonstrate that maximizing $\mcL_{\textrm{ELBO}}(\theta, \phi)$ recovers the true ADMG causal graph in the limit of infinite data. See \Cref{subsec: consistency}.

\subsection{Choice of Prior Over ADMG Graphs}
As discussed in \Cref{subsec:background/latent_confounders}, the ADMG $G$ can be parameterized by two binary adjacency matrices, $G_D$ whose entries indicate the presence of a directed edge, and $G_B$ whose edges indicate the presence of a bidirected edge. As discussed in \Cref{sec:structural_identifiability}, a necessary assumption for structural identifiability is that each latent variable is a parent-less confounder of a pair of non-adjacent observed variables. This further implies that the underlying ADMG must be bow-free (both a directed and a bidirected edge cannot exist between the same pair of observed variables). This constraint can be imposed by leveraging the bow-free constrain penalty introduced by  \cite{bhattacharya2021differentiable},
\begin{equation}
    \label{eq:bowfree_constraint}
    h(G_D, G_B) = \operatorname{trace}\left(e^{G_D}\right) - D + \operatorname{sum}\left(G_D \circ G_B\right)
\end{equation}
which is non-negative and zero only if $(G_D, G_B)$ is a bow-free ADMG. As suggested in \cite{geffner2022deep}, we implement the prior as
\begin{equation}
    \label{eq: graph_prior}
    p(G) \propto \exp \left(-\lambda_{s1} \|G_D\|^2_F - \lambda_{s2} \|G_B\|^2_{F} - \rho h(G_D, G_B)^2 - \alpha h(G_D, G_B)\right)
\end{equation}
where the coefficients $\alpha$ and $\rho$ are increased whilst maximizing $\mcL_{\textrm{ELBO}}(\theta, \phi)$, following an augmented Lagrangian scheme \citep{nemirovski1999optimization}. Prior knowledge about the sparseness of the graph is introduced by penalizing the norms $\|G_D\|^2_F$ and $\|G_B\|^2_F$ with scaling coefficients $\lambda_{s1}$ and $\lambda_{s2}$. 

\subsection{Choice of Variational Approximation}
We seek to approximate the intractable true posterior $p_{\theta}(\bfu^1, \ldots, \bfu^N, G | \bfx^1, \ldots, \bfx^N)$ using the variational distribution $q_{\phi}(\bfu^1, \ldots, \bfu^N, G)$. We assume the following factorized approximate posterior
\begin{equation}
    q_{\phi}(\bfu^1, \ldots, \bfu^N, G) = q_{\phi}(G) \prod_{n=1}^N q_{\phi}(\bfu^n | \bfx^n).
\end{equation}
For $q_{\phi}(G)$, we use a product of Bernoulli distributions for each potential directed edge in $G_D$ and bidirected edge in $G_B$. For $G_D$, edge existence and edge orientation are parameterized separately using the ENCO parameterization \citep{lippe2021efficient}. For $q_{\phi}(\bfu^n | \bfx^n)$, we apply amortized VI as in VAE literature \citep{kingma2013auto}, where $q_{\phi}(\bfu^n | \bfx^n)$ is parameterized as a Gaussian distribution whose mean and variance is determined by passing the $\bfx^n$ through an inference MLP . 

\subsection{Maximizing $\mcL_{\textrm{ELBO}}(\theta, \phi)$ Recovers the Ground Truth ADMG} \label{subsec: consistency}
In \Cref{sec:structural_identifiability}, we have proved the structural identifiability of ADMGs. In this section, we further show that under certain assumptions, maximizing $\mcL_{\textrm{ELBO}}(\theta, \phi)$ recovers the true ADMG graph (denoted by $G^0$) in the infinite data limit.  This result is stated in the following proposition:

\begin{proposition}[Maximizing $\mcL_{\textrm{ELBO}}(\theta, \phi)$ recovers the ground truth ADMG] \label{prop: 2} Assume that:
\begin{itemize}[leftmargin=*]
  \setlength\itemsep{0pt}
    \item \Cref{assumption:non-linear_anm,assumption:confounders} (hence the identifiability of ADMGs) holds for the model $p_\theta(\bfx; G)$.
    
    \item The model is correctly specified  ($\exists \theta^*$ such that $p_{\theta^*}(\bfx;G^0)$ recovers the data-generating process).
    
    \item Regularity condition: for all $\theta$ and $G$ we have  $\mathbb{E}_{p(\bfx;G^0)}\left[\vert\log p_\theta(\bfx;G)\vert\right]<\infty$.
    \item The variational family of $q_\phi (\bfu|\bfx, G)$ is flexible enough, i.e., it contains $p_\theta(\bfu|\bfx, G)$. 
\end{itemize}

Then, the solution $(\theta',q'_\phi({G}))$ that maximizes $\mcL_{\textrm{ELBO}}(\theta, \phi)$ satisfies $q'_\phi({G})=\delta({G}={G}^0)$.

\end{proposition}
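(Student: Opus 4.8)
The plan is to reduce the joint maximization over $(\theta, q_\phi)$ to a statement purely about the graph posterior $q_\phi(G)$, and then to show that this posterior concentrates on $G^0$ in the infinite-data limit by combining a standard variational closed form with the ADMG identifiability result (Proposition~1). First I would rewrite $\mcL_{\textrm{ELBO}}$ using the exact per-datapoint identity $\Exp{q_\phi(\bfu^n|\bfx^n,G)}{\log p_\theta(\bfx^n|\bfu^n,G)} - \KL{q_\phi(\bfu^n|\bfx^n,G)}{p_\theta(\bfu^n|G)} = \log p_\theta(\bfx^n|G) - \KL{q_\phi(\bfu^n|\bfx^n,G)}{p_\theta(\bfu^n|\bfx^n,G)}$. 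Substituting this into \Cref{eq:elbo} turns the objective into $\Exp{q_\phi(G)}{\sum_n \log p_\theta(\bfx^n|G)} - \Exp{q_\phi(G)}{\sum_n \KL{q_\phi(\bfu^n|\bfx^n,G)}{p_\theta(\bfu^n|\bfx^n,G)}} - \KL{q_\phi(G)}{p(G)}$. Because the variational family is flexible enough to contain the true conditional $p_\theta(\bfu|\bfx,G)$, the inner KL terms vanish at the optimum, leaving the reduced objective $\mcL(\theta,q_\phi(G)) = \Exp{q_\phi(G)}{\sum_n \log p_\theta(\bfx^n|G)} - \KL{q_\phi(G)}{p(G)}$.

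Next, for fixed $\theta$ I would solve for the optimal graph posterior in closed form. Since the space of ADMGs on $D$ nodes is finite and the Gibbs prior $p(G)$ assigns positive mass to every graph (in particular to $G^0$), the identity $\Exp{q}{S(G)} - \KL{q}{p} = \log Z - \KL{q}{q^\star}$, with $q^\star \propto p\,e^{S}$, shows that $\mcL$ is maximized by the Gibbs posterior $q^\star_\phi(G) \propto p(G)\exp\!\left(\sum_n \log p_\theta(\bfx^n|G)\right)$. It therefore suffices to show that this posterior, once $\theta$ is also optimized, collapses onto $G^0$ as $N \to \infty$.

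Then I would pass to the infinite-data limit. Writing $\ell(\theta,G) = \Exp{p(\bfx;G^0)}{\log p_\theta(\bfx;G)}$, the regularity condition guarantees finiteness and, via the strong law of large numbers, $\tfrac{1}{N}\sum_n \log p_\theta(\bfx^n|G) \to \ell(\theta,G)$ almost surely. The prior and log-normalizer contribute only $O(1)$ against the $O(N)$ log-likelihood mass, so the log-odds of any $G$ relative to $G^0$ behave like $N\big(\ell(\theta,G) - \ell(\theta,G^0)\big) + o(N)$. Correct specification yields $\theta^*$ with $p_{\theta^*}(\bfx;G^0) = p(\bfx;G^0)$, hence $\ell(\theta^*,G^0) = -\entropy{p(\bfx;G^0)}$, which is the largest value attainable over all $(\theta,G)$ by Gibbs' inequality, with equality only when $p_\theta(\bfx;G) = p(\bfx;G^0)$. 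ADMG identifiability forbids this equality for every $\theta$ whenever $G \neq G^0$, so $\KL{p(\bfx;G^0)}{p_\theta(\bfx;G)} > 0$ and $\ell(\theta,G) < \ell(\theta^*,G^0)$ there. Each $G \neq G^0$ then receives exponentially vanishing posterior mass, giving $q'_\phi(G) = \delta(G = G^0)$.

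The hard part will be upgrading the \emph{pointwise} separation delivered by identifiability into a \emph{strict, $\theta$-uniform} gap. Identifiability only gives $\KL{p(\bfx;G^0)}{p_\theta(\bfx;G)} > 0$ for each individual $\theta$; to guarantee $\sup_\theta \ell(\theta,G) < \ell(\theta^*,G^0)$ for every $G \neq G^0$ one needs $\inf_\theta \KL{p(\bfx;G^0)}{p_\theta(\bfx;G)} > 0$, which rules out a sequence of parameters approaching the true distribution through a wrong graph and generally requires a compactness or continuity argument on the parameter space (or an explicit additional assumption). A related subtlety is that a single $\theta$ is shared across graphs inside $\Exp{q_\phi(G)}{\cdot}$; I would handle this by observing that the maximizing $\theta$ is pinned down by the dominant $G^0$ term (maximum-likelihood consistency of the correctly specified model at $G^0$), so that the separation established above indeed governs the limiting posterior.
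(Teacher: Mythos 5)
Your opening reduction is exactly the paper's: use the flexibility assumption to collapse $\KL{q_\phi(\bfu^n|\bfx^n,G)}{p_\theta(\bfu^n|\bfx^n,G)}$ and rewrite the objective as $\Exp{q_\phi(G)}{\sum_n \log p_\theta(\bfx^n|G)} - \KL{q_\phi(G)}{p(G)}$. After that the routes diverge. You solve the finite-$N$ problem in closed form (the Gibbs posterior $q^\star(G)\propto p(G)\exp\left(\sum_n\log p_\theta(\bfx^n|G)\right)$) and then study the asymptotics of its log-odds via the strong law of large numbers. The paper instead divides by $N$ and passes to the limit of the objective itself: \Cref{lemma: prior negeligible} kills the prior KL term, leaving the population functional $\sum_{G}w_\phi(G)\,\mathbb{E}_{p(\bfx;G^0)}\left[\log p_\theta(\bfx|G)\right]$, whose maximizer is analyzed by the elementary fact that a convex combination is at most its largest element, with equality only if every graph in the support attains the maximum; correct specification makes that maximum equal $\mathbb{E}_{p(\bfx;G^0)}\left[\log p(\bfx;G^0)\right]$, so every supported graph $G$ must satisfy $p_{\theta'}(\bfx|G)=p(\bfx;G^0)$, and identifiability then forces $G=G^0$. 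A minor additional wrinkle in your route: the closed-form Gibbs posterior generally lies outside the paper's mean-field Bernoulli family $q_\phi(G)$, whereas the paper's argument needs only that the family contains point masses.

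The uniformity problem you flag at the end is real, but it is an artifact of your formulation rather than something the proposition requires. Because you optimize at finite $N$ and then let $N\to\infty$, you must control $\hat\ell_N(\theta_N,G)-\hat\ell_N(\theta_N,G^0)$ along a drifting sequence $\theta_N$, which indeed needs $\inf_\theta \KL{p(\bfx;G^0)}{p_\theta(\bfx;G)}>0$ for each $G\neq G^0$ plus a uniform law of large numbers --- i.e.\ compactness/continuity assumptions that appear nowhere in the statement. The paper sidesteps this entirely by interpreting ``maximizes $\mcL_{\textrm{ELBO}}$'' as maximizing the \emph{limiting} normalized objective: any maximizer of that functional attains the value $\mathbb{E}_{p(\bfx;G^0)}\left[\log p(\bfx;G^0)\right]$ (attainable by $(\theta^*,\delta_{G^0})$, and an upper bound by Gibbs' inequality), and the equality analysis invokes identifiability only pointwise at the single maximizing $\theta'$ --- no $\theta$-uniform gap, no compactness. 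So you have in effect tried to prove a stronger statement (convergence of finite-sample maximizers) than the one the paper proves (characterization of maximizers of the population objective); if you adopt the paper's order of limits, your own argument closes without the extra assumptions you identified as missing.
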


The proof of \Cref{prop: 2} can be found in \Cref{app: prop_2}, which justifies performing causal discovery by maximizing ELBO of the \modelname\ model. Once the model has been trained and the ADMG has been recovered, we can use the \modelname\ to perform causal inference as detailed in \Cref{app:estimating_te}.

\section{Experiments}
\label{sec:experiments}
We evaluate \modelname\ in performing both causal discovery and causal inference on a number of synthetic and real-world datasets. Note that we run our model \emph{both with and without the bow-free constraint, identified as N-BF-ADMG (our full model) and N-ADMG (for ablation purpose)}, respectively. We compare the performance of our model against five baselines: DECI \citep{geffner2022deep} (which we refer to as N-DAG for consistency), FCI \citep{spirtes2000causation}, RCD \citep{maeda2020rcd}, CAM-UV \citep{maeda2021causal}, and DCD \citep{bhattacharya2021differentiable}.  We evaluate the causal discovery performance using F1 scores for directed and and bidirected adjacency. The expected values of these metrics are reported using the learned graph posterior (which is deterministic for RCD, CAM-UV, and DCD). Causal inference is evaluated using the expected ATE as described in \Cref{app:estimating_te}. We evaluate the causal inference performance of the causal discovery benchmarks by fixing $q(G)$ to either deterministic or uniform categorical distributions on the learned causal graphs, then learning a non-linear flow-based ANM by optimizing \Cref{eq:elbo} in an identical manner to \modelname. A full list of results and details of the experimental set-up are included in \Cref{app:experiments}. Our implementation will be available at \texttt{https://github.com/microsoft/causica}.

\subsection{Synthetic Fork-Collider Dataset}
\label{subsec:fork_collider}
We construct a synthetic fork-collider dataset consisting of five nodes (\Cref{fig:ground_true_fork_collider}). The data-generating process is a non-linear ANM with Gaussian noise. Variable pairs $(x_2, x_3)$, and $(x_3, x_4)$ are latent-confounded, whereas $(x_4, x_5)$ share a observed confounder. We evaluate both causal discovery and inference performances. For discovery, we evaluate F-score measure on both $G_D$ and $G_B$. For causal inference, we choose $x_4$ as the treatment variable taking and $x_2$, $x_3$ and $x_5$ as the response variables, and evaluate ATE RMSE to benchmark performance. It is therefore crucial that discovery methods do not misidentify latent confounded variables as having a direct cause between them, as this would result in biased ATE estimates.

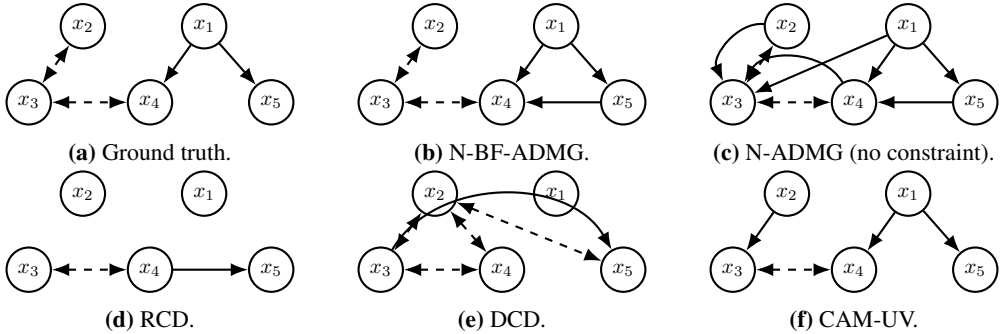
\begin{figure}
\begin{subfigure}{0.33\columnwidth}
    \centering
    \begin{tikzpicture}[thick,scale=0.8, every node/.style={scale=0.8}]
        \node[state] (x3) at (0, 0) {$x_3$};
        \node[state] (x2) [above =of x3, xshift=9mm] {$x_2$};
        \node[state] (x4) [right =of x3] {$x_4$};
        \node[state] (x5) [right =of x4] {$x_5$};
        \node[state] (x1) [above =of x4, xshift=9mm] {$x_1$};
        
        \path[bidirected] (x2) edge (x3);
        \path[bidirected] (x3) edge (x4);
        \path (x1) edge (x4);
        \path (x1) edge (x5);
    \end{tikzpicture}
    \caption{Ground truth.}
    \label{fig:ground_true_fork_collider}
\end{subfigure}
\begin{subfigure}{0.33\columnwidth}
    \centering
    \begin{tikzpicture}[thick,scale=0.8, every node/.style={scale=0.8}]
       \node[state] (x3) at (0, 0) {$x_3$};
        \node[state] (x2) [above =of x3, xshift=9mm] {$x_2$};
        \node[state] (x4) [right =of x3] {$x_4$};
        \node[state] (x5) [right =of x4] {$x_5$};
        \node[state] (x1) [above =of x4, xshift=9mm] {$x_1$};
        
        \path[bidirected] (x2) edge (x3);
        \path[bidirected] (x3) edge (x4);
        \path (x1) edge (x4);
        \path (x1) edge (x5);
        \path (x5) edge (x4);
    \end{tikzpicture}
    \caption{N-BF-ADMG.}
    \label{fig:csuite_causal_graph_nbfadmg}
\end{subfigure}
\begin{subfigure}{0.33\columnwidth}
    \centering
    \begin{tikzpicture}[thick,scale=0.8, every node/.style={scale=0.8}]
        \node[state] (x3) at (0, 0) {$x_3$};
        \node[state] (x2) [above =of x3, xshift=9mm] {$x_2$};
        \node[state] (x4) [right =of x3] {$x_4$};
        \node[state] (x5) [right =of x4] {$x_5$};
        \node[state] (x1) [above =of x4, xshift=9mm] {$x_1$};
        
        \path[bidirected] (x2) edge (x3);
        \path[bidirected] (x3) edge (x4);
        \path (x1) edge (x3);
        \path (x1) edge (x4);
        \path (x1) edge (x5);
        \path (x2) edge[bend right=60] (x3);
        \path (x4) edge[bend right=60] (x3);
        \path (x5) edge (x4);
    \end{tikzpicture}
    \caption{ \modelname \ (no constraint).}
\end{subfigure}

\begin{subfigure}{0.33\columnwidth}
    \centering
    \begin{tikzpicture}[thick,scale=0.8, every node/.style={scale=0.8}]
        \node[state] (x3) at (0, 0) {$x_3$};
        \node[state] (x2) [above =of x3, xshift=9mm] {$x_2$};
        \node[state] (x4) [right =of x3] {$x_4$};
        \node[state] (x5) [right =of x4] {$x_5$};
        \node[state] (x1) [above =of x4, xshift=9mm] {$x_1$};
        
        \path[bidirected] (x3) edge (x4);
        \path (x4) edge (x5);
    \end{tikzpicture}
    \caption{ RCD.}
\end{subfigure}
\begin{subfigure}{0.33\columnwidth}
    \centering
    \begin{tikzpicture}[thick,scale=0.8, every node/.style={scale=0.8}]
        \node[state] (x3) at (0, 0) {$x_3$};
        \node[state] (x2) [above =of x3, xshift=9mm] {$x_2$};
        \node[state] (x4) [right =of x3] {$x_4$};
        \node[state] (x5) [right =of x4] {$x_5$};
        \node[state] (x1) [above =of x4, xshift=9mm] {$x_1$};
        
        \path[bidirected] (x2) edge (x3);
        \path[bidirected] (x2) edge (x4);
        \path[bidirected] (x2) edge (x5);
        \path[bidirected] (x3) edge (x4);
        \path (x3) edge[bend left=60] (x5);
    \end{tikzpicture}
    \caption{DCD.}
\end{subfigure}
\begin{subfigure}{0.33\columnwidth}
    \centering
    \begin{tikzpicture}[thick,scale=0.8, every node/.style={scale=0.8}]
        \node[state] (x3) at (0, 0) {$x_3$};
        \node[state] (x2) [above =of x3, xshift=9mm] {$x_2$};
        \node[state] (x4) [right =of x3] {$x_4$};
        \node[state] (x5) [right =of x4] {$x_5$};
        \node[state] (x1) [above =of x4, xshift=9mm] {$x_1$};
        
        \path[bidirected] (x3) edge (x4);
        \path (x1) edge (x4);
        \path (x1) edge (x5);
        \path (x2) edge (x3);
    \end{tikzpicture}
    \caption{ CAM-UV.}
    \label{fig:csuite_causal_graph_camuv}
\end{subfigure}
    \caption{ADMG identification results on fork-collider dataset.}
    \label{fig:csuite_causal_graph}
\end{figure}

\setlength{\tabcolsep}{3pt}
\begin{wraptable}{r}{0.57\textwidth}
\vspace{-0.44cm} 
    \centering
    \small
    \caption{Causal discovery and inference results for the fork-collider dataset. The table shows the mean and standard error results across five different random seeds.}
    \begin{tabular}{lrrr}
        \toprule
        \textsc{\textbf{Method}} & \textsc{$G_D$ Fscore} & \textsc{$G_B$ Fscore} & \textsc{ATE RMSE} \\
        \midrule
        N-BF-ADMG-G & 0.64 (0.06) & \textbf{0.93 (0.07)} &\textbf{ 0.022 (0.003)} \\
        N-ADMG-G & 0.49 (0.02)  & \textbf{0.99 (0.00)} & 0.239 (0.067) \\
        N-DAG-G & 0.50 (0.00) & 0.00 (0.00) & \textbf{0.046 (0.025)}  \\
        FCI & 0.00 (0.00) & 0.75 (0.00) & 0.072 (0.015) \\
        RCD & 0.00 (0.00) & 0.54 (0.00) & 0.206 (0.029) \\
        CAM-UV & \textbf{0.80 (0.00)} & 0.67 (0.00) & \textbf{0.017 (0.003)} \\
        DCD & 0.00 (0.00) & 0.67 (0.00) & 0.208 (0.064) \\
        \bottomrule
    \end{tabular}
    \label{tab:csuite_results}
    \vspace{-8pt}
\end{wraptable}

\Cref{fig:csuite_causal_graph_nbfadmg}-\ref{fig:csuite_causal_graph_camuv} visualizes the causal discovery results of different methods, and \Cref{tab:csuite_results} summarizes both discovery and inference metrics. While no methods can fully recover the ground truth graph, our method N-BF-ADMG-G and CAM-UV achieved the best overall performance. N-BF-ADMG-G and N-ADMG-G on average are able to recover all bidirected edges from the data, while CAM-UV can only recover half of the latent variables. Without the bow-free constraint, N-ADMG-G discovers a directed edge from $x_4$ to $x_3$, which results in poor ATE RMSE performance. On the other hand, DAG-based method (N-DAG-G) is not able to deal with latent confounders, resulting in the poor f-scores in both $G_D$ and $G_B$. Linear ANM-based methods (RCD and DCD) perform significantly worse than other methods, resulting in 0 f-scores for directed matrices and largest ATE errors. This demonstrates the necessity for introducing non-linear assumptions.

\subsection{Random Confounded ER Synthetic Dataset}
\label{subsec:er_sf}
We generate synthetic datasets from ADMG extension of \textit{Erd\H{o}s-R\'enyi} (ER) graph model \citep{lachapelle2019gradient}. We first sample random ADMGs from ER model, and simulate each variable using a randomly sampled nonlinear ANM. Latent confounders are then removed from the training set. See \Cref{app: er} for details. We consider the number of nodes, directed edges and latent confounders triplets $(d, e, m) \in \{(4, 6, 2), (8, 20, 6), (12, 50, 10)\}$. The resulting datasets are identified as \textbf{ER}$(d, e, m)$. \Cref{fig:er_causal_discovery} compares the performance of \modelname\ with the baselines. All variants of \modelname\ outperform the baselines for most datasets, highlighting its effectiveness relative to other methods (even those that employ similar assumptions). Similar to the fork-collider dataset, we see that methods operating under the assumption of linearity perform poorly when the data-generating process is non-linear. It is worth noting that even when the exogenous noise of \modelname\ is misspecified, its still exceeds that of other methods in most cases. This robustness is a desirable property as in many settings the form of exogenous noise is unknown.

\begin{figure}[h]
    \centering
    \includegraphics[width=\textwidth]{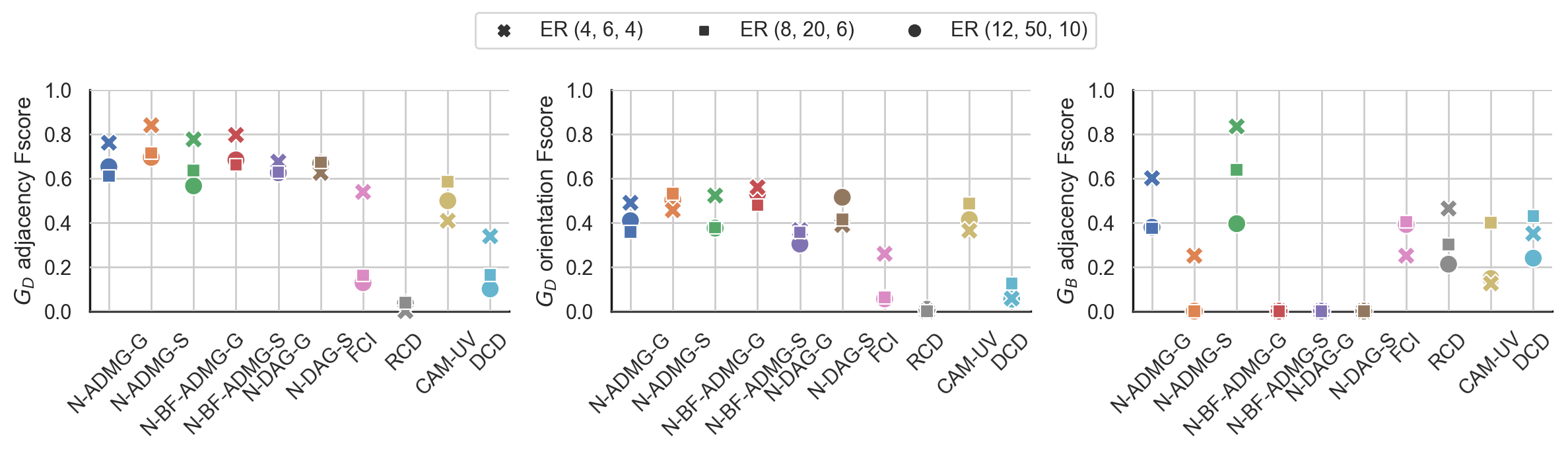}
    \caption{Causal discovery results for synthetic ER datasets. For readability, the \modelname\ results are connected with lines. The figure shows mean results across five randomly generated datasets.}
    \label{fig:er_causal_discovery}
\end{figure}

\subsection{Infant Health and Development Program (IHDP) Dataset}
For the real-world datasets, we evaluate treatment effect estimation performances on infant health and development program data (IHDP). This dataset contains measurements of both infants and their mother during real-life data collected in a randomized experiment. The main task is to estimate the effect of home visits by specialists on future cognitive test scores of infants, where the ground truth outcomes are simulated as in \citep{hill2011bayesian}. To make the task more challenging, additional confoundings are introduced by removing a subset (non-white mothers) of the treated population. More details can be found in \Cref{app: ihdp}. We first perform causal discovery to learn the underlying ADMG of the dataset, and then perform causal inference. Since the true causal graph is unknown, we evaluate the causal inference performance of each method by estimating the ATE RMSE.

Apart from the aforementioned baselines, here we introduce four more methods: PC-DWL (PC algorithm for discovery, DoWhy \citep{sharma2021dowhy} linear adjustment for inference); PC-DwNL (PC for discovery, DoWhy double machine learning for inference); N-DAG-G-DwL (N-DAG Gaussian for discovery, linear adjustment for inference); and N-DAG-S-DwL (N-DAG Spline for discovery, linear adjustment for inference). Results are summarized in \Crefrange{fig: ihdp_g}{fig: ihdp_dowhy}. Generally, models with non-Gaussian exogenous assumptions tend to have lower ATE estimation errors; while models with linear assumptions (RCD and DCD) have the worst ATE RMSE. Interestingly, the DoWhy-based plug-in estimators tend to worsen the performances of SCM models. However, regardless of the assumptions made on exogenous noise, our method (N-BF-ADMG-G and N-BF-ADMG-S) consistently outperforms all other baselines with the same noise.  It is evident that for causal inference in real-world datasets, the ability of N-BF-ADMG to handle latent confoundings and nonlinear causal relationships becomes very effective. 

\begin{figure}[h]
    \begin{subfigure}{0.3\columnwidth}
        \includegraphics[width=\textwidth]{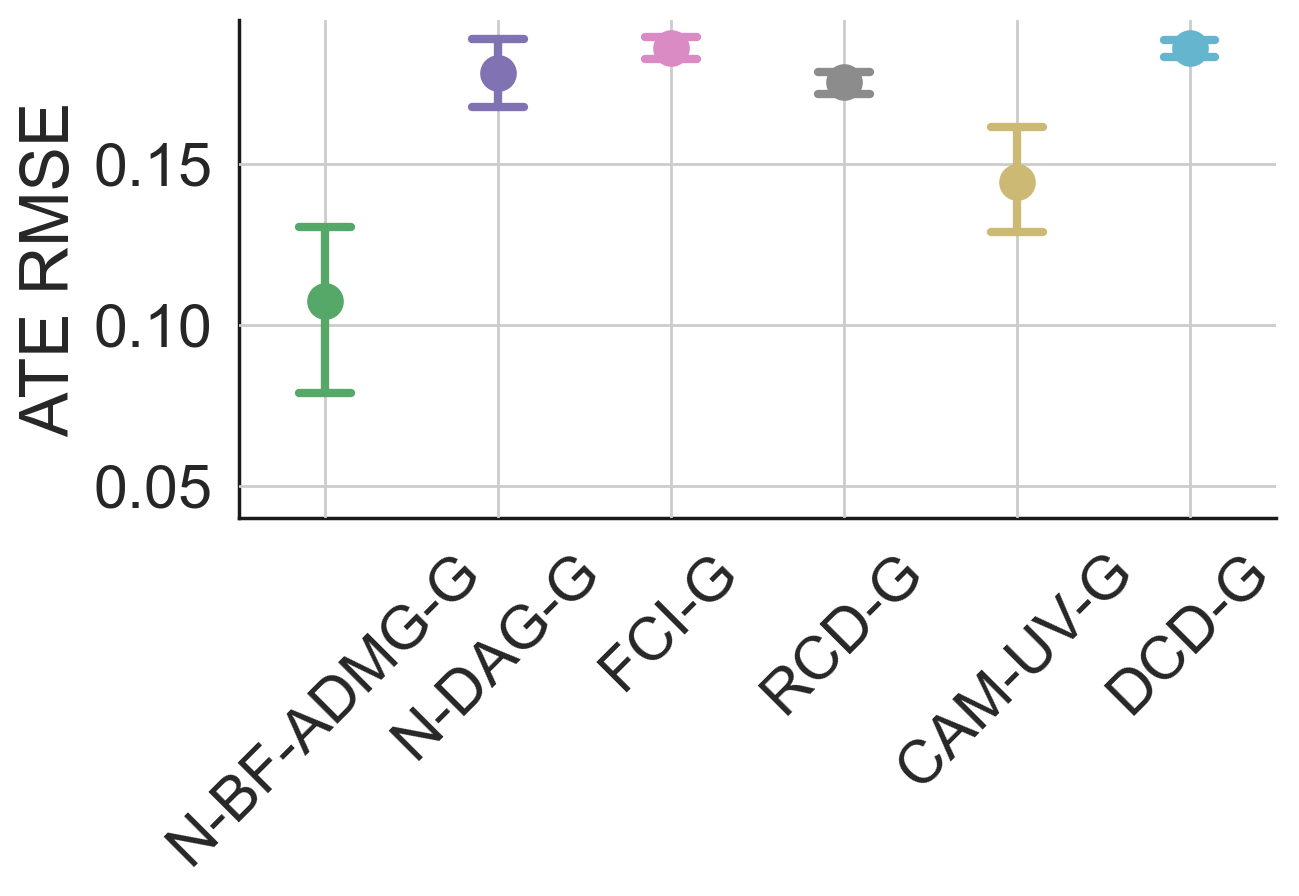}
        \caption{Gaussian exogenous noise.}
        \label{fig: ihdp_g}
    \end{subfigure}
    \hfill
    \begin{subfigure}{0.3\columnwidth}
        \includegraphics[width=\textwidth]{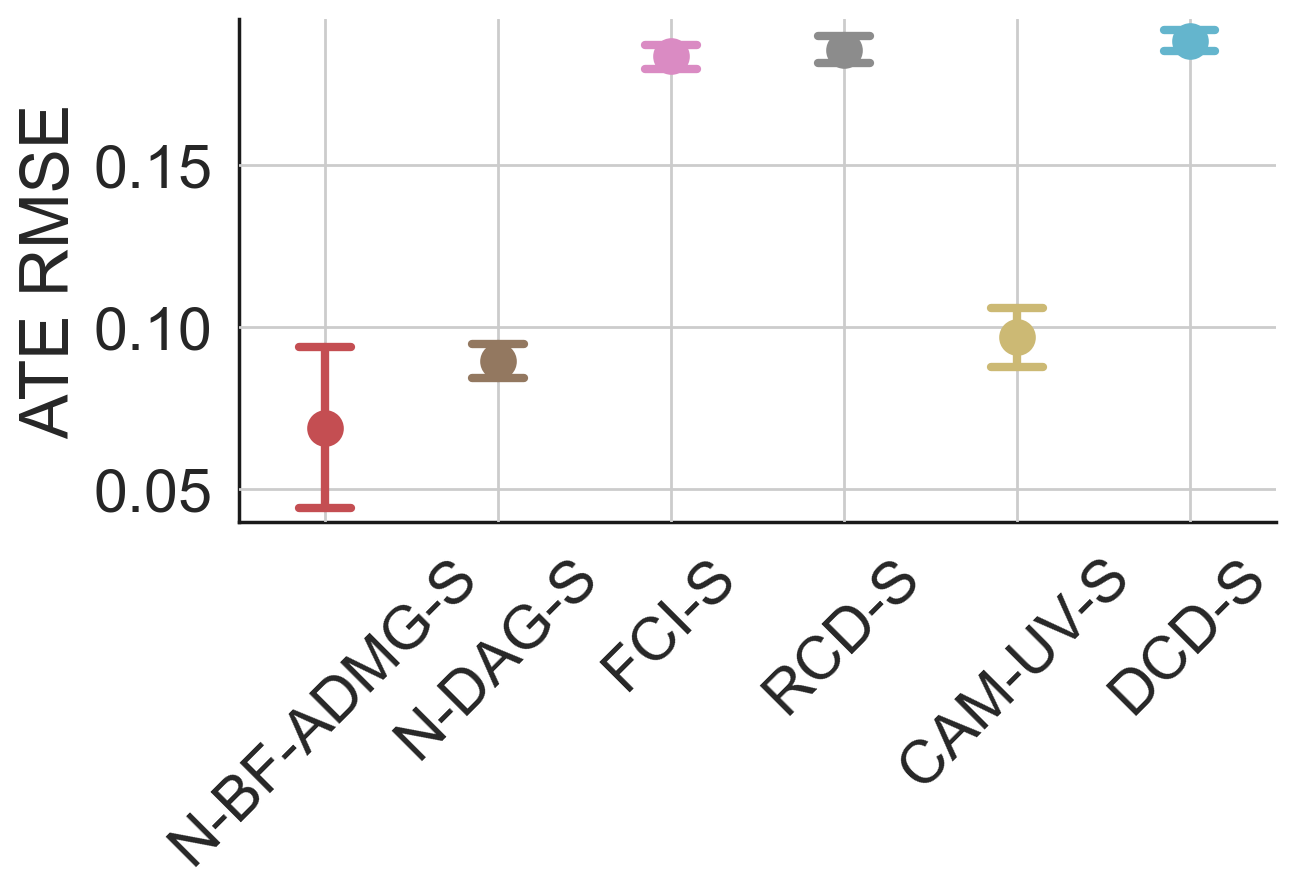}
        \caption{Spline exogenous noise.}
        \label{fig: ihdp_s}
    \end{subfigure}
    \hfill
    \begin{subfigure}{0.3\columnwidth}
        \includegraphics[width=\textwidth]{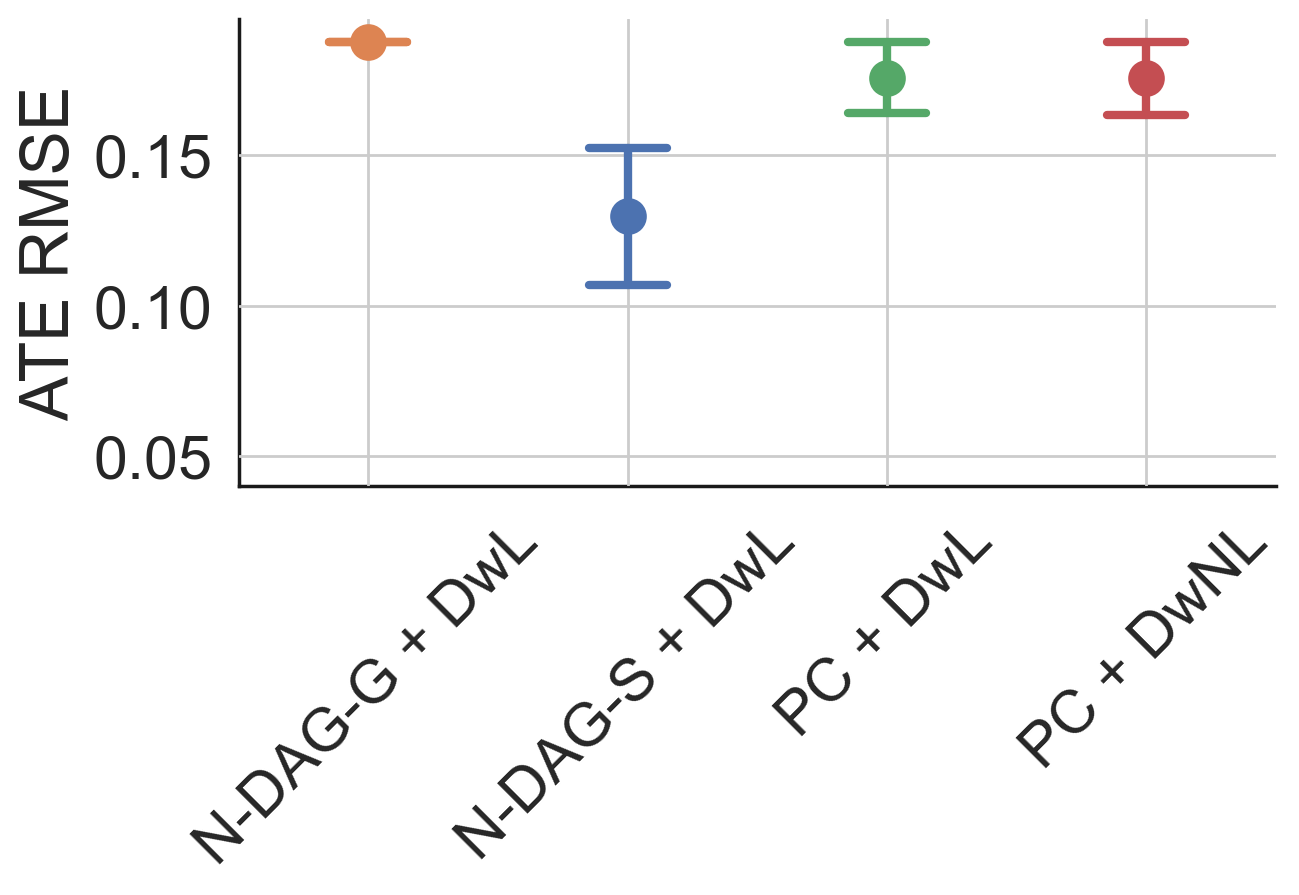}
        \caption{DoWhy baselines.}
        \label{fig: ihdp_dowhy}
    \end{subfigure}
    \caption{Causal inference results for the IHDP dataset. The figure shows mean $\pm$ standard error results across five random initialisations.}
    \label{fig:ihdp_causal_inferenc}
\end{figure}

\section{Conclusion And Future Work}

In this work, we proposed Neural ADMG Learning (\modelname), a novel framework for gradient-based causal reasoning in the presence of latent confounding for nonlinear SCMs. We established identifiability theory for nonlinear ADMGs under latent confounding, and proposed a practical ADMG learning algorithm that is both flexible and efficient. In future work, we will further extend our framework on how to more general settings (e.g., the effect from observed and latent variables can modulate in certain forms; latent variables can confound adjacent variables; etc), and how to improve certain modelling choices such as variational approximation qualities over both causal graph and latent variables

\section*{Reproducibility Statement}
A number of efforts have been made/will be made for the sake of reproducibility. First, we open source our package in our github page \texttt{github.com/microsoft/causica/tree/v0.0.0}. In addition, in this paper we included clear explanations of any assumptions and a complete proof of the claims in the Appendix, as well as model settings and hyperparameters. All datasets used in this paper are either publicly available data, or synthetic data whose generation process is described in detail.

\bibliography{bibliography}
\bibliographystyle{iclr2023_conference}

\appendix

\newpage

\begin{center}
{\huge APPENDIX}
\end{center}

\section{Proof of \Crefrange{lem:bidirected}{lem:directed}}
\label{app:lemmas}

First we describe the residual faithfulness condition inherited from \citep{maeda2021causal} in our \Crefrange{lem:bidirected}{lem:directed}:

\begin{defi}[Residual faithfulness condition] \label{def: residual}
 We say that nonlinear functions $g_i, g_j$ satisfies the residual faithfulness condition, if: for any two arbitrary subset of $\bfx$, denote by $M$ and $N$, when both $(x_i - g_i(M))$ and $(x_i - g_j(M))$ have terms involving the same exogenous noise $\epsilon_k$, then $(x_i - g_i(M))$ and $(x_i - g_j(M))$ are mutually dependent.
\end{defi}

Next, we provide the proof for \Crefrange{lem:bidirected}{lem:directed} in \Cref{sec:structural_identifiability}. To prove those lemmas, we need the help of the following lemma, which extends Lemma A in \citep{maeda2021causal}, except we don't assume any form for $g_i$ other than non-linearity.
\begin{lem}
    \label{lem:a}
    Let $s(x_i)$ denote an arbitrary function of $x_i$. The residual of $s(x_i)$ regressed onto $\bfx_{- i}$ cannot be independent of $\epsilon_i$:
    \begin{equation}
        \forall g_i,\ [s(x_i) - g_i(\bfx_{- i}) \cancel{\indep} \epsilon_i].
    \end{equation}
\end{lem}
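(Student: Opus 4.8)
The plan is to argue by contradiction, exploiting the additive-noise structure of \Cref{eq: non-linear_anm} to show that the exogenous term $\epsilon_i$ enters $x_i$ in a way that no regressor built from $\bfx_{-i}$ can undo. First I would record the structural identity implied by \Cref{assumption:non-linear_anm}: $x_i = h_i + \epsilon_i$, where $h_i := f_{G_D,\bfx}(\bfx;\theta)_i + f_{G_B,\bfu}(\bfu;\theta)_i$ is a deterministic function of the exogenous noises of the \emph{ancestors} of $x_i$ together with the latent noises feeding into $x_i$. Since the exogenous variables are mutually independent, $h_i \indep \epsilon_i$, and hence $s(x_i) = s(h_i + \epsilon_i)$ genuinely depends on $\epsilon_i$ (for non-degenerate $s$; a constant $s$ must be excluded, since it trivially violates the statement). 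Suppose, for contradiction, that some admissible $g_i$ yields $r := s(x_i) - g_i(\bfx_{-i})$ with $r \indep \epsilon_i$.

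Next I would partition $\bfx_{-i}$ into the non-descendants $\bfx_{\mathrm{nd}}$ of $x_i$ and its descendants $\bfx_{\mathrm{d}}$. The non-descendants are deterministic functions of noises other than $\epsilon_i$, so $\bfx_{\mathrm{nd}} \indep \epsilon_i$, and the \emph{only} channel through which $\bfx_{-i}$ can carry information about $\epsilon_i$ is via $\bfx_{\mathrm{d}}$. To isolate the mechanism, I would first dispatch the base case in which no descendant enters the regressor: then $g_i(\bfx_{-i}) \indep \epsilon_i$, so writing $A = s(x_i)$ (a non-degenerate function of $\epsilon_i$ plus the independent part $h_i$) and $B = g_i(\bfx_{-i})$ with $B \indep \epsilon_i$, the assumption $r = A - B \indep \epsilon_i$ fails by a Darmois--Skitovich / direct covariance argument: conditioning on $\epsilon_i = t$ shifts the conditional law of $A$ by a deterministic amount while leaving $B$ untouched, so the conditional law of $r$ given $\epsilon_i = t$ cannot be the same for all $t$.

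I expect the \textbf{main obstacle} to be the descendant channel. A descendant $x_j$ of $x_i$ does depend on $\epsilon_i$, so in principle $g_i$ could try to reconstruct $\epsilon_i$ from $\bfx_{\mathrm{d}}$ and cancel the copy of $\epsilon_i$ sitting inside $s(x_i)$. I would rule this out using two facts. First, $\epsilon_i$ enters $x_i$ with unit additive weight and independently of everything upstream, whereas it enters each descendant only after being pushed through the (non-linear) mechanisms and mixed with that descendant's own fresh, independent exogenous noises; thus any function $g_i(\bfx_{\mathrm{d}})$ can recover $\epsilon_i$ only in a form contaminated by noises independent of $\epsilon_i$, and an \emph{exact} additive cancellation would force the intervening mechanisms to be affine, contradicting non-linearity. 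Second, and more robustly, this is exactly where the residual faithfulness condition (\Cref{def: residual}) is designed to intervene: both $r$ and the trivial residual $\epsilon_i = x_i - h_i$ contain the common exogenous term $\epsilon_i$, so the unavoidable presence of $\epsilon_i$ in $r$ forces $r \cancel{\indep} \epsilon_i$, contradicting the hypothesis and completing the proof.

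In summary, the crux is showing that the additive, unit-weight appearance of $\epsilon_i$ in $x_i$ cannot be matched by any non-linear $g_i$ acting on downstream variables, and the two assumptions carry the argument: \Cref{assumption:non-linear_anm} supplies the clean additive decomposition $x_i = h_i + \epsilon_i$ with $h_i \indep \epsilon_i$, while the residual faithfulness condition converts the structural statement ``$r$ still contains $\epsilon_i$'' into the probabilistic conclusion ``$r \cancel{\indep} \epsilon_i$'' required by \Cref{lem:a}.
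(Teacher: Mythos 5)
Your proposal is correct and follows essentially the same route as the paper's own proof: argue by contradiction, observe that any cancellation of $\epsilon_i$ must flow through descendants of $x_i$, and then use the fact that descendants mix $\epsilon_i$ non-linearly with fresh independent noises, so the dependence on $\epsilon_i$ cannot be additively separated (the paper phrases this as $g_i(\bfx_{-i}) = u_i(\*\epsilon)$ admitting no decomposition $a_i(\*\epsilon_{-i}) + b_i(\epsilon_i)$) and hence cannot be exactly cancelled. If anything, your write-up is slightly more careful than the paper's: you explicitly exclude degenerate (constant) $s$, and you make explicit the appeal to the residual faithfulness condition to pass from ``$\epsilon_i$ cannot be removed from the residual'' to actual probabilistic dependence, a step the paper performs implicitly.
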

\begin{proof}
Assume that $[s(x_i) - g_i(\bfx_{- i}) \indep \epsilon_i]$ holds, then $\bfx_{- i}$ must contain at least one descendent of $x_i$ as it must have dependence on the noise $\epsilon_i$ to cancel effect of $\epsilon_i$ in $s(x_i)$. We can express $g_i(\bfx_{- i})$ as $u_i(\*\epsilon)$.  Since $g_i$ operates on variables defined by non-linear transformations of the exogenous noise terms, we cannot express $u_i$ as $a_i(\*\epsilon_{- i}) + b_i(\epsilon_i)$. $\bfx_{- i}$ contains a descendent of $x_i$, so $\*\epsilon_{- i}$ includes at least one noise term $\epsilon_k$ that satisfies $x_i \indep \epsilon_k$ (i.e. is not in $x_i$). Thus, terms containing $\epsilon_i$ cannot be fully removed from $s(x_i) - g_i(\bfx_{- i})$ and so $[s(x_i) - g_i(\bfx_{- i}) \indep \epsilon_i]$ does not hold.
\end{proof}

\subsection{Proof of \Cref{lem:bidirected}}
\begin{proof}

Define $g_i$ and $g_j$ as $g_i(\bfx_{- i}) = f_{i, \bfx}(\textbf{pa}_{\bfx}(i)) + g'_i(\bfx_{- i})$ and $g_j(\bfx_{- j}) = f_{j, \bfx}(\textbf{pa}_{\bfx}(j)) + g'_i(\bfx_{- j})$ respectively. Then, \Cref{eq:dependent_residuals} becomes equivalent to 
\begin{equation}
    \forall g'_i, g'_j,\ [(f_{i, \bfu}(\textbf{pa}_{\bfu}(i)) + \epsilon_i - g'_i(\bfx_{- i}) \cancel{\indep} (f_{j, \bfu}(\textbf{pa}_{\bfu}(j)) + \epsilon_j - g'_j(\bfx_{- j})].
\end{equation}

Given \Cref{lem:a} and following the same arguments as in \cite{maeda2021causal}, this is equivalent to
\begin{equation}
    (f_{i, \bfu}(\textbf{pa}_{\bfu}(i)) + \epsilon_i) \cancel{\indep} (f_{j, \bfu}(\textbf{pa}_{\bfu}(j)) + \epsilon_j).
\end{equation}
Since $\epsilon_i \indep \epsilon_j$, we have
\begin{equation}
    (f_{i, \bfu}(\textbf{pa}_{\bfu}(i)) \cancel{\indep} \epsilon_j) \lor (n_i \cancel{\indep} f_{j, \bfu}(\textbf{pa}_{\bfu}(j))) \lor (f_{i, \bfu}(\textbf{pa}_{\bfu}(i)) \cancel{\indep} f_{j, \bfu}(\textbf{pa}_{\bfu}(j))).
\end{equation}
The first implies the existence of an unobserved mediator between $x_j$ and $x_i$, the second implies the existence of an unobserved mediator between $x_i$ and $x_j$, and the third implies the existence of an unobserved confounder. Given the assumption of latent variables being confounders and minimality, this indicates the presence of a latent confounder and no direct cause between $x_i$ and $x_j$. 
\end{proof}

\subsection{Proof of \Cref{lem:no_edge}}
\begin{proof}
When \Cref{eq:no_cause} holds, \Cref{eq:dependent_residuals} does not. Thus, there is no unobserved confounder between $x_i$ and $x_j$. Assume that $x_j$ is a direct cause of $x_i$, and that \Cref{eq:no_cause} is satisfied for $g_i$ and $g_j$. $x_i$ contains a nonlinear function of $\epsilon_j$ that cannot be removed by $g_i(\bfx_{- (i, j)})$, thus $(x_i - g_i(\bfx_{- (i, j)})) \cancel{\indep} \epsilon_j$. Similarly, $(x_j - g_j(\bfx_{- (i, j)})) \cancel{\indep} \epsilon_i$. Thus, we have $[(x_i - g_i(\bfx_{-(i, j)}) \cancel{\indep} (x_j - g_j(\bfx_{-(i, j)}))]$ which contradicts our initial assumption. The same arguments apply when $x_i$ is a direct cause of $x_j$, implying that there can be no causal relationship between $x_i$ and $x_j$.
\end{proof}

\subsection{Proof of \Cref{lem:directed}}
\begin{proof}
When \Cref{eq:causal_independence} holds, \Cref{eq:dependent_residuals} does not and so there is no latent confounder between $x_i$ and $x_j$. When $\Cref{eq:causal_dependence}$ holds, \Cref{eq:no_cause} does not hold and so there is a direct causal relationship between $x_i$ and $x_j$. 

Assume that $x_j$ is a direct cause of $x_i$. Define $g_1(\bfx_{- i}) = f_{i, \bfx}(\textbf{pa}_{\bfx}(i))$ and $g_2(\bfx_{- (i, j)}) = f_{j, \bfx}(\textbf{pa}_{\bfx}(j))$, giving $x_i - g_i(\bfx_{- i}) = f_{i, \bfu}(\textbf{pa}_{\bfu}(i)) + \epsilon_i$ and $x_j - g_j(\bfx_{- (i, j)}) = f_{j, \bfu}(\textbf{pa}_{\bfu}(j)) + \epsilon_j$. When there is no latent confounder, $f_{i, \bfu}(\textbf{pa}_{\bfu}(i)) \indep f_{j, \bfu}(\textbf{pa}_{\bfu}(j))$. Thus, $(f_{i, \bfu}(\textbf{pa}_{\bfu}(i)) + \epsilon_i) \indep (f_{j, \bfu}(\textbf{pa}_{\bfu}(j)) + \epsilon_j)$ hods and \Cref{eq:causal_independence} is satisfied.

Now, assume that $x_i$ is a direct cause of $x_j$. Using \Cref{lem:a}, we have
\begin{equation}
    \forall g_i,\ [(x_i - g_i(\bfx_{- i}) \cancel{\indep} \epsilon_i].
\end{equation}
Similarly, since $x_j$ is a function of $x_i$ we also have
\begin{equation}
    \forall g_j,\ [(x_j - g_j(\bfx_{- (i, j)}) \cancel{\indep} \epsilon_i].
\end{equation}
Collectively, this implies
\begin{equation}
    \forall g_i, g_j,\ [(x_i - g_i(\bfx_{- i}) \cancel{\indep} (x_j - g_j(\bfx_{- i, j})]
\end{equation}
which contradicts \Cref{eq:causal_independence}. Thus, if \Cref{eq:causal_independence} is satisfied then $x_j$ is a direct cause of $x_i$.
\end{proof}

\section{Proof of \Cref{prop: 2}} \label{app: prop_2}
To prove \Cref{prop: 2}, we need the following lemma:
\begin{lem}\label{lemma: prior negeligible}
Assume a variational distribution $q_\phi(G)$ over a space of graphs $\mathcal{G}_\phi$, where each graph $G\in \mathcal{G}_\phi$ has a non-zero associated weight $w_\phi(G)$. With the soft prior $p(G)$ defined as \Cref{eq: graph_prior} and bounded $\lambda_1, \lambda_2, \rho,\alpha$, we have
\begin{equation}
    \lim_{N\rightarrow \infty}\frac{1}{N}\mathrm{KL}[q_\phi(G)\Vert p(G)] = 0.
\end{equation}
\end{lem}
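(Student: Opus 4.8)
The plan is to show that the KL divergence itself is a \emph{bounded} quantity, uniformly in $\phi$ and independent of $N$, so that dividing by $N$ and letting $N\to\infty$ forces the limit to zero by a squeeze argument. The structural fact driving everything is that the space of ADMGs on $D$ observed nodes is \emph{finite} (there are at most $2^{D^2}$ choices for $G_D$ and $2^{\binom{D}{2}}$ for $G_B$), so all sums below are finite and all graph-dependent quantities are bounded. First I would write the divergence as an entropy plus a cross-entropy term,
\begin{equation}
    \mathrm{KL}[q_\phi(G)\Vert p(G)] = \sum_{G\in\mathcal{G}_\phi} q_\phi(G)\log q_\phi(G) \;-\; \sum_{G\in\mathcal{G}_\phi} q_\phi(G)\log p(G),
\end{equation}
and bound each piece separately. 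The non-zero-weight assumption on $w_\phi$ guarantees $q_\phi(G)>0$ on its support, and since the prior \Cref{eq: graph_prior} is an exponential of finite quantities it satisfies $p(G)>0$ everywhere, so $q_\phi\ll p$ and the divergence is well defined and finite to begin with.

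For the entropy term, since $q_\phi$ is a probability distribution on the finite set $\mathcal{G}_\phi$, its entropy is at most $\log|\mathcal{G}_\phi|$, giving $-\log|\mathcal{G}_\phi|\le \sum_G q_\phi(G)\log q_\phi(G)\le 0$; in particular this term is bounded above by $0$ uniformly in $\phi$.

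The cross-entropy term is where the explicit form of the prior enters. Writing $-\log p(G) = \lambda_{s1}\|G_D\|_F^2 + \lambda_{s2}\|G_B\|_F^2 + \rho\, h(G_D,G_B)^2 + \alpha\, h(G_D,G_B) + \log Z$, where $Z$ is the partition function normalizing \Cref{eq: graph_prior}, I would argue that each summand is bounded. Because every adjacency matrix has $0/1$ entries, $\|G_D\|_F^2\le D^2$, $\|G_B\|_F^2\le D^2$, and $h(G_D,G_B)$ is likewise bounded by a constant depending only on $D$; combined with the assumption that $\lambda_{s1},\lambda_{s2},\rho,\alpha$ are bounded, and with $\log Z$ finite (as $Z$ is a finite sum of exponentials of bounded arguments, hence bounded above and away from zero), there is a constant $B$, independent of both $\phi$ and $N$, with $-\log p(G)\le B$ for every $G$. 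Averaging against $q_\phi$ preserves the bound, so $-\sum_G q_\phi(G)\log p(G)\le B$.

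Combining the two bounds with Gibbs' inequality $\mathrm{KL}\ge 0$ yields $0\le \mathrm{KL}[q_\phi(G)\Vert p(G)]\le B$ uniformly in $\phi$, whence $0\le \tfrac{1}{N}\mathrm{KL}[q_\phi(G)\Vert p(G)]\le B/N\to 0$, proving the claim. The only step I would treat with real care—and the main potential obstacle—is verifying that $\log Z$ remains bounded, i.e.\ that the partition function stays bounded away from zero; this is precisely where the boundedness hypothesis on the coefficients is essential, since an unbounded augmented-Lagrangian schedule for $\rho,\alpha$ could in principle let $\log Z$ and the cross-entropy drift with $N$. Everything else reduces to routine estimation over the finite graph space.
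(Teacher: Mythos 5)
Your proof is correct. The paper itself gives essentially no argument here: its entire proof of this lemma is the single line ``This directly follows from Lemma 1 of \citep{geffner2022deep}'', deferring to the analogous result for the DAG prior in the DECI paper. What you have written is a self-contained, elementary proof of exactly the fact that citation is meant to supply: finiteness of the graph space, $0/1$ entries bounding $\|G_D\|_F^2$, $\|G_B\|_F^2$ and $h(G_D,G_B)$ by constants depending only on $D$, boundedness of the coefficients giving a uniform bound $-\log p(G)\le B$ (including the $\log Z$ term, since $Z$ is a finite sum of exponentials of bounded arguments and hence bounded above and away from zero), and then the squeeze $0\le \frac{1}{N}\mathrm{KL}[q_\phi(G)\Vert p(G)]\le B/N\to 0$ uniformly in $\phi$. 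Your version arguably buys something the paper's citation does not: Geffner et al.'s Lemma 1 is stated for a DAG prior with the acyclicity penalty only, whereas the prior in \Cref{eq: graph_prior} lives on the larger space of ADMG pairs $(G_D,G_B)$ and includes the bow-free penalty, so a careful reader must re-verify the argument transfers --- which is precisely what your direct estimation does. You also correctly isolate the one load-bearing hypothesis, boundedness of $\rho$ and $\alpha$, without which the augmented-Lagrangian schedule could make $\log Z$ (and hence the cross-entropy) drift; this is why the lemma states that hypothesis explicitly, and in practice the paper caps both at $10^3$.
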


\begin{proof}
This directly follows from Lemma 1 of \citep{geffner2022deep}.
\end{proof}

Now we can proceed to prove \Cref{prop: 2}:

\begin{proof}
For \modelname, in the infinite data limit $\mcL_{\textrm{ELBO}}$ becomes
\begin{equation}
    \begin{aligned}
    &\lim_{N \rightarrow \infty}\frac{1}{N} \sum_{n=1}^N \Exp{q_{\phi}(G)q(\bfu_n | G)}{\log p_{\theta}(\bfx_n, \bfu_n | G)} - \frac{1}{N}\sum_{n=1}^N H[q(\bfu_n | \bfx_n, G)] - \underbrace{\frac{1}{N}\KL{q(G)}{p(G)}}_{\rightarrow 0} \\
    &= \lim_{N \rightarrow \infty}\frac{1}{N} \sum_{n=1}^N  \sum_{G \in \mathcal{G}_\phi} w_\phi(G) \Exp{q(\bfu|\bfx, G)} {\log p_{\theta}(\bfx_n | \bfu_n, G)} - \frac{1}{N}\sum_{n=1}^N \Exp{q(G)}{\KL{q(\bfu_n | \bfx_n G)}{p(\bfu_n | \bfx_n, G)}}.
    \end{aligned}
\end{equation}
where the zeroing of the KL divergence follows from \Cref{lemma: prior negeligible}. Given fixed $\theta$, the optimal posterior $q^*(\bfu_n | \bfx_n, G)$ satisfies $q^*(\bfu_n | \bfx_n, G) = p_{\theta}(\bfu_n | \bfx_n, G)$ due to the flexibility assumption. Thus,
\begin{equation}
\begin{aligned}
& \lim_{N \rightarrow \infty} \mcL_{\textrm{ELBO}}(\theta, \phi, q^*(\bfu_n | \bfx_n, G)) \\
    = & \lim_{N \rightarrow \infty}\frac{1}{N} \sum_{n=1}^N  \sum_{G \in \mathcal{G}_\phi} w_\phi(G)  {\log p_{\theta}(\bfx_n | G)} \\
    = & \int p(\bfx;G^0)\sum_{G\in \mathcal{G}_\phi}w_\phi(G)\log p_\theta(\bfx|G)d\bfx 
\end{aligned}
\end{equation}
where $p(\bfx;G^0)$ denotes the data generation distribution with the ground truth ADMG, $G^0$. Let $(\theta^*, G^*) = \arg \max \int p(\bfx;G^0) \log p_\theta(\bfx|G)d\bfx$ be the MLE solution. Since $\sum_{G\in\mathcal{G}_\phi} w_\phi(G)=1$, $w_\phi(G)>0$, we have
\begin{align*}
    \sum_{G\in\mathcal{G}_\phi}w_\phi(G)\mathbb{E}_{p(\bfx;G^0)}\left[\log p_\theta(\bfx|G)\right]\leq \mathbb{E}_{p(\bfx;G^0)}\left[\log p_{\theta^*}(\bfx;G^*)\right]
\end{align*}
with the optimal value of $\sum_{G\in\mathcal{G}_\phi}w_\phi(G)\mathbb{E}_{p(\bfx;G^0)}\left[\log p_\theta(\bfx|G)\right]$ is achieved  when every graph $G\in\mathcal{G}_\phi$ and associated parameter $\theta_G$ satisfies 
\begin{equation}
    \mathbb{E}_{p(\bfx;G^0)}\left[\log p_{\theta_G}(\bfx|G)\right] = \mathbb{E}_{p(\bfx;G^0)}\left[\log p_{\theta^*}(\bfx|G^*)\right].
    \label{eq: prop_2_proof_condition}
\end{equation}
Since the model is correctly specified, the MLE solution $(\theta^*, G^*)$ satisfies
\[
\mathbb{E}_{p(\bfx;G^0)}\left[\log p_{\theta^*}(\bfx|G^*)\right] = \mathbb{E}_{p(\bfx;G^0)}\left[\log p(\bfx;G^0)\right]
\]
Therefore, condition \Cref{eq: prop_2_proof_condition} implies for every graph $G' \in\mathcal{G}_\phi$, $G' = G^0$ under the regularity condition; or equivalently, $\mathcal{G}_\phi = \{G' = G^0 \}$. This proves our statement that $q'_\phi({G})=\delta({G}={G}')$, where ${G}'={G}^0$. 

\end{proof}

\section{Estimating Treatment Effects}
\label{app:estimating_te}
For all experiments we consider, the causal quantity of interest we wish to estimate is the expected average treatment effect (ATE), $\Exp{q_{\phi}(G)}{\text{ATE}(\bfa, \bfb | G)}$, where the expectation is taken with respect to our learned posterior over causal graphs $q_{\phi}(G)$:
\begin{equation}
    \label{eq:ate}
    \Exp{q_{\phi}(G)}{\text{ATE}(\bfa, \bfb | G)} = \Exp{q_{\phi}(G)}{\Exp{p(\bfx_Y | \text{do}(\bfx_T = \bfb), G)}{\bfx_Y} - \Exp{p(\bfx_Y | \text{do}(\bfx_T = \bfb), G)}{\bfx_Y}}.
\end{equation}
This requires samples from $p(\bfx_Y | \text{do}(\bfx_T = \bfb), G) = p(\bfx_Y | \bfx_T = \bfb, G_{\text{do}(\bfx_T)})$, where $G_{\text{do}(\bfx_T)}$ is the `mutilated' graph obtained by removing incoming edges into $\bfx_T$. We can achieve this by simulating the learnt SCM on $G_{\text{do}(\bfx_T)}$ whilst keeping $\bfx_T = \bfb$ fixed. Note that $q_{\phi}(\bfu | \bfx)$ is not used to estimate the ATE; it suffices to sample $\bfu$ from the prior distribution, $p(\bfu)$. In our setting, the inference MLP $q_{\phi}(\bfu | \bfx)$ is only used as a means through which the likelihood of the data can be evaluated efficiently, and thus model parameters learned. This is in a similar spirit to VAEs \citep{kingma2013auto}.

\section{Related Work On Causal Inference Under Latent Confounding}

Attempting to perform causal inference in the presence of latent confounding can lead to biased estimates \citep{pearl2012measurement}. Whilst the observed data distribution may still be identifiable, estimating causal effects are not \citep{spirtes2000causation}. A recent string of work has made progress in the case where the effects of multiple interventions are being estimated \citep{tran2017implicit, ranganath2018multiple, wang2019blessings, damour2019on}. An alternative approach is to assume identifiability of the joint distribution over both latent and observed variables given just the observations. \cite{louizos2017causal} point out that there are many cases in which this is possible \citep{khemakhem2020variational, kingma2013auto}. Nevertheless, all these methods assume the underlying causal graph is known. More recently, \cite{mohammad2021calculus} argue that a DAG latent variable model trained on data can be used for down-stream causal inference tasks even if its parameters are non-identifiable, as long as the query can be identified from the observed variables according to the do-calculus.

\section{Additional Discussions on Structural Identifiability of Latent Variables}
\label{app:latent_structural_identfiability}

In \Cref{subsec:background/latent_confounders}, we argued that the identifiability of ADMG does not imply the structural identifiability of the magnified SCM. In this section, we will present more discussions on certain identifiability of latent structures (of magnified SCMs). In general, these examples demonstrate that for linear non-Gaussian ANMs the structure of latent variables can be refined beyond the assumption of latent confounders acting between pairs of non-adjacent observed variables, whilst the same techniques cannot achieve the same for non-linear ANMs.

\subsection{Determining Causal Structure Amongst Latent Variables}
Recently, \cite{cai2019triad} demonstrated that it is possible to discover the structure amongst latent variables using their so-called Triad constraints. Their method is limited to the linear non-Gaussian ANM case. In this section, we demonstrate that an analogous constraint isn't available for non-linear ANMs.
\begin{figure}[h]
    \centering
    \begin{subfigure}[b]{0.45\textwidth}
    \centering
    \begin{tikzpicture}
        \node[state] (u1) at (0,0) {$u_1$};
        \node[state] (u2) [right =of u1] {$u_2$};
        \node[state] (xi) [below =of u1] {$x_i$};
        \node[state] (xj) [below =of u2] {$x_j$};
        \node[state] (xk) [right =of xj] {$x_k$};
        
        \path (u1) edge node[above] {$f_{12}$} (u2);
        \path (u1) edge node[left] {$f_{1i}$} (xi);
        \path (u2) edge node[left] {$f_{2j}$} (xj);
        \path (u2) edge node[right] {$f_{2k}$} (xk);
    \end{tikzpicture}
    \caption{}
    \label{fig:triad1}
    \end{subfigure}
    \hfill
    \begin{subfigure}[b]{0.45\textwidth}
    \centering
    \begin{tikzpicture}
        \node[state] (u1) at (0,0) {$u_1$};
        \node[state] (u2) [right =of u1] {$u_2$};
        \node[state] (xi) [below =of u1] {$x_i$};
        \node[state] (xj) [below =of u2] {$x_j$};
        \node[state] (xk) [right =of xj] {$x_k$};
        
        \path (u2) edge node[above] {$f_{21}$} (u1);
        \path (u1) edge node[left] {$f_{1i}$} (xi);
        \path (u2) edge node[left] {$f_{2j}$} (xj);
        \path (u2) edge node[right] {$f_{2k}$} (xk);
    \end{tikzpicture}
    \caption{}
    \label{fig:triad2}
    \end{subfigure}
    \caption{The two possible latent variable structures considered by \cite{cai2019triad}.}
\end{figure}
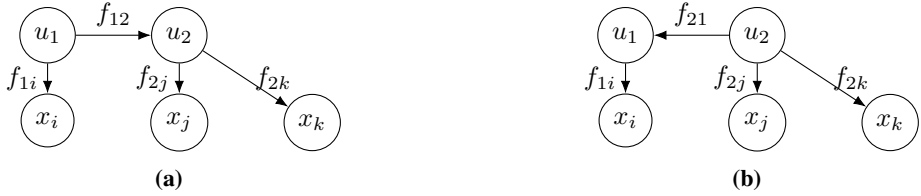

Consider the causal graphs shown in \Cref{fig:triad1} and \Cref{fig:triad2}.
\begin{lem}[Linear non-Gaussian identifiability]
In the linear non-Gaussian ANM case, \Cref{eq:triad_constraint} is satisfied only for the causal graph shown in \Cref{fig:triad2}. In the non-linear ANM case, \Cref{eq:triad_constraint} is not satisfied for either \Cref{fig:triad1} or \Cref{fig:triad2}.
\begin{equation}
\label{eq:triad_constraint}
    \exists g,\quad x_i - g(x_j) \indep x_k
\end{equation}
\end{lem}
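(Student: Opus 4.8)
The plan is to reduce every observed variable to an explicit expression in the mutually independent exogenous noises $\epsilon_{u_1}, \epsilon_{u_2}, \epsilon_i, \epsilon_j, \epsilon_k$, and then, for a candidate $g$, track which noise terms are shared between the residual $x_i - g(x_j)$ and $x_k$. Since all noises are independent, the constraint in \Cref{eq:triad_constraint} can hold only if no shared noise survives in the residual. I would treat the linear non-Gaussian and the non-linear regimes separately, because the mechanism that permits (respectively kills) the constraint differs between them.

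For the linear non-Gaussian case I would first expand the two graphs. In \Cref{fig:triad2} one finds $x_i = f_{1i}f_{21}\epsilon_{u_2} + f_{1i}\epsilon_{u_1} + \epsilon_i$, $x_j = f_{2j}\epsilon_{u_2} + \epsilon_j$, and $x_k = f_{2k}\epsilon_{u_2} + \epsilon_k$, so $x_i$ and $x_k$ share only $\epsilon_{u_2}$; taking $g$ linear with slope $\alpha = f_{1i}f_{21}/f_{2j}$ cancels $\epsilon_{u_2}$ from the residual, leaving a combination of $\epsilon_{u_1}, \epsilon_i, \epsilon_j$ that is independent of $x_k$. In \Cref{fig:triad1}, by contrast, both $x_j$ and $x_k$ inherit $\epsilon_{u_1}$ and $\epsilon_{u_2}$, so the residual shares \emph{two} noise terms with $x_k$ while a linear $g$ offers only one free coefficient. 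Here I would invoke the Darmois--Skitovich theorem: independence forces the product of coefficients on each shared non-Gaussian noise to vanish, which yields the incompatible requirements $\alpha = 0$ (from $\epsilon_{u_2}$) and $\alpha = f_{1i}/(f_{2j}f_{12})$ (from $\epsilon_{u_1}$), solvable only if $f_{1i}=0$. This is exactly where non-Gaussianity is essential: in the Gaussian case the residual and $x_k$ are jointly Gaussian, so one can always choose $\alpha$ to make them uncorrelated hence independent, and the constraint would fail to separate the two graphs.

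For the non-linear case the positive route used for \Cref{fig:triad2} collapses. Now $x_i = f_{1i}\bigl(f_{21}(\epsilon_{u_2}) + \epsilon_{u_1}\bigr) + \epsilon_i$ entangles $\epsilon_{u_2}$ with $\epsilon_{u_1}$ inside the outer non-linearity, whereas any $g(x_j)$ entangles $\epsilon_{u_2}$ only with $\epsilon_j$. Mimicking the argument of \Cref{lem:a}, I would show that because $f_{1i}$ is non-linear the term $f_{1i}\bigl(f_{21}(\epsilon_{u_2}) + \epsilon_{u_1}\bigr)$ admits no additive split into a function of $\epsilon_{u_2}$ plus a function of $\epsilon_{u_1}$, so no $g$ can strip the $\epsilon_{u_2}$ content from the residual; since $x_k$ also carries $\epsilon_{u_2}$, independence fails. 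The same template handles \Cref{fig:triad1}: $x_i$ carries $\epsilon_{u_1}$ but not $\epsilon_{u_2}$, while subtracting $g(x_j)$ can remove the $\epsilon_{u_1}$ content of $x_i$ only at the price of injecting $\epsilon_{u_2}$ (and, by the same non-separability, it cannot even do the former cleanly), so the residual remains dependent on $x_k$.

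The main obstacle I anticipate is making the non-linear non-cancellation rigorous: formalizing that a common noise entering through a non-linearity bundled with one independent noise cannot be eliminated by subtracting a function of a variable that carries that same common noise bundled with a different independent noise. I would lean on the non-separability fact already exploited in the proof of \Cref{lem:a}, together with the residual faithfulness condition of \Cref{def: residual}, to convert the qualitative statement ``the residual still contains $\epsilon_{u_2}$'' into genuine statistical dependence with $x_k$.
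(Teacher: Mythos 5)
Your proposal is correct and takes essentially the same route as the paper's own sketch: expand each observed variable in the exogenous noises, exhibit the explicit linear cancellation $g = \frac{f_{1i}f_{21}}{f_{2j}}$ for \Cref{fig:triad2}, and argue in the remaining three cases that any $g$ removing a noise term shared with $x_k$ necessarily injects another shared noise (or, in the non-linear case, cannot disentangle $\epsilon_{u_1}$ from $\epsilon_{u_2}$ coupled inside the outer non-linearity). Your appeal to the Darmois--Skitovich theorem for the linear non-Gaussian failure of \Cref{fig:triad1} is just a more formal rendering of the paper's qualitative ``two incompatible cancellations'' argument (and, like the paper's sketch, is fully airtight only for linear $g$), so the two proofs coincide in substance.
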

\begin{sproof}
For the causal graph in \Cref{fig:triad1}, \Cref{eq:triad_constraint} is equivalent to
\begin{equation}
    \exists g\quad f_{1i}(u_1) + \epsilon_i - g(f_{2j}(f_{12}(u_1), u_2), \epsilon_j) \indep f_{2k}(f_{12}(u_1), u_2) + \epsilon_k.
\end{equation}
On the right, we have some function of the latent variables $u_1$ and $u_2$, $f_{2k}(f_{12}(u_1), u_2)$. On the left, we have some function of the same noise terms, $f_{1i}(u_1) - g(f_{2j}(f_{12}(u_1), \epsilon_j)$. To remove $u_1$ from both sides, we require $g$ to be non-zero and so a term including $u_2$ is still present. To remove $u_2$, we again require $g$ to be non-zero and so a term including $u_1$ is still present. Thus, \Cref{eq:triad_constraint} is not satisfied in either the linear non-Gaussian or non-linear ANM case.

For the causal graph in \Cref{fig:triad2}, \Cref{eq:triad_constraint} is equivalent to
\begin{equation}
    \exists g\quad f_{1i}(f_{21}(u_2), u_1) + \epsilon_i - g(f_{2j}(u_2) + \epsilon_j) \indep f_{2k}( u_2) + \epsilon_k.
\end{equation}
In the linear case, we can construct a linear $g$ to remove $u_2$ fom the left side so that \Cref{eq:triad_constraint} holds (i.e.\ $g = \frac{f_{1i}f_{21}}{f_{2j}}$). In the non-linear case, $u_1$ and $u_2$ are coupled in the leftmost term and cannot be removed by a term involving $u_2$ and $\epsilon_j$. Hence, \Cref{eq:triad_constraint} is violated.
\end{sproof}

\subsection{Determining the Number of Latent Confounders}
Here, we consider whether the number of latent confounders can be determined in the non-linear ANM case. \Cref{lem:triple_confounded} shows that in the linear non-Gaussian case, the number of latent confounders acting between a triplet of confounded observed variables can be determined (by verifying certain constraints on marginal distributions on observed variables), whilst the same approach cannot be used in the non-linear ANM case.

Consider the two causal graphs shown in \Cref{fig:triple_confounded1} and \Cref{fig:triple_confounded2}.
\begin{lem}[Linear non-Gaussian identifiability]
\label{lem:triple_confounded}
In the linear non-Gaussian ANM case, \Cref{eq:triple_confounded} is satisfied only for the causal graph shown in \Cref{fig:triad2}. In the non-linear ANM case, \Cref{eq:triple_confounded} is not satisfied for either \Cref{fig:triad1} or \Cref{fig:triad2}.
\begin{equation}
\label{eq:triple_confounded}
    \exists g,\quad x_i - g(x_j) \indep x_k.
\end{equation}
\end{lem}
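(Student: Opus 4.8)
The plan is to mirror the proof strategy of the preceding lemma: write out the structural equations of each candidate graph (\Cref{fig:triple_confounded1} and \Cref{fig:triple_confounded2}) in terms of the latent confounders and the exogenous noise, substitute them into the candidate constraint \Cref{eq:triple_confounded}, and then ask whether a function $g$ can render the residual $x_i - g(x_j)$ independent of $x_k$. The whole distinction between the two graphs should reduce to a single structural feature, namely whether $x_i$ and $x_k$ share a latent confounder that does \emph{not} also feed into $x_j$: if every confounder reaching $x_i$ (and $x_k$) is ``visible'' through $x_j$, then regressing $x_j$ out of $x_i$ can in principle strip the shared latent, otherwise it cannot.

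First I would treat the single-confounder graph, where one latent $u$ is the common parent of all three observed variables, so $x_i = f_i(u)+\epsilon_i$, $x_j = f_j(u)+\epsilon_j$, $x_k = f_k(u)+\epsilon_k$. In the linear case the $f_\bullet$ are linear, and choosing $g$ to subtract $(a_i/a_j)\,x_j$ cancels the $u$-term exactly, leaving $x_i - g(x_j) = \epsilon_i - (a_i/a_j)\,\epsilon_j$, a function of the exogenous noises only and hence independent of $x_k$; so \Cref{eq:triple_confounded} holds. For the multi-confounder graph I would show the converse: since $x_i$ and $x_k$ depend on a confounder absent from $x_j$, no linear $g$ can remove that latent from the residual, so $x_i - g(x_j)$ stays dependent on $x_k$ and the constraint fails. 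Matching coefficients makes this a routine linear-algebra computation, and it establishes the linear half of the claim.

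For the non-linear half the plan is to invoke \Cref{lem:a} together with the residual faithfulness condition of \Cref{def: residual}. The key point is that $g$ acts only on $x_j = f_j(u)+\epsilon_j$, a noisy non-linear observation of $u$; by the reasoning of \Cref{lem:a}, $g(x_j)$ cannot be decomposed into a clean function of $u$ plus a function of $\epsilon_j$, so it cannot reproduce $f_i(u)$ exactly. Any attempt to cancel $f_i(u)$ therefore leaves residual $u$-dependence, now coupled with $\epsilon_j$, and since $x_k$ depends on the same $u$, the residual remains dependent on $x_k$. Thus \Cref{eq:triple_confounded} fails even for the single-confounder graph, and a fortiori for the multi-confounder graph, which already failed in the linear case for a purely structural reason; this reproduces the linear/non-linear dichotomy asserted in the lemma.

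The hard part will be the non-linear non-cancellation step: making rigorous the claim that no $g$ applied to the noisy variable $x_j$ can remove the shared confounder from $x_i - g(x_j)$ without leaving a trace detectable through $x_k$. The linear calculation is just coefficient matching, but the non-linear version genuinely relies on the non-separability guaranteed by \Cref{lem:a}, and the most delicate point is arguing that the leftover $u$-dependence is not \emph{accidentally} independent of $f_k(u)$ — this is precisely where the residual faithfulness assumption of \Cref{def: residual} must be used to rule out fortuitous cancellations.
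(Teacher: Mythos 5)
Your proposal follows essentially the same route as the paper's proof: for the single-confounder graph (\Cref{fig:triple_confounded2}) you cancel the latent with the linear choice $g = f_i/f_j$ and argue that a non-linear $g$ applied to the noisy observation $f_j(u)+\epsilon_j$ necessarily couples $u$ with $\epsilon_j$, while for the pairwise-confounder graph (\Cref{fig:triple_confounded1}) you observe that the confounder shared by $x_i$ and $x_k$ but absent from $x_j$ can never be removed, which is exactly the paper's argument about $u_2$. The only minor difference is that you lean on \Cref{lem:a} and \Cref{def: residual} to make the non-linear non-cancellation rigorous, whereas the paper asserts it directly (with the implicit caveat $f_i \neq f_j$, since $f_i$ proportional to $f_j$ would allow a linear $g$ to cancel $u$ even in the non-linear ANM case).
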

\begin{sproof}
For \Cref{fig:triple_confounded2}, \Cref{eq:triple_confounded} is equivalent to
\begin{equation}
    \exists g \quad f_i(u) + \epsilon_i - g(f_j(u) + \epsilon_j) \indep f_k(u) + \epsilon_k.
\end{equation}
In the linear non-Gaussian case, it is straightforward to set $g = \frac{f_i}{f_j}$ to remove the common noise term $u$ from the left term and make the two sides independent. In the non-linear case, when $f_i \neq f_j$ the common noise term $u$ cannot be removed from the left as $g$ must be non-linear, and thus produces a term that involves both $u$ and $\epsilon_j$. 
For \Cref{fig:triple_confounded1}, \Cref{eq:triple_confounded} is equivalent to
\begin{equation}
    \exists g \quad f_i(u_1, u_2) + \epsilon_i - g(f_j(u_1, u_3) + \epsilon_j) \indep f_k(u_2, u_3) + \epsilon_k.
\end{equation}
$u_2$ cannot be removed from the left, and so \Cref{eq:triple_confounded} does not hold in either the linear non-Gaussian or non-linear case.
\end{sproof}

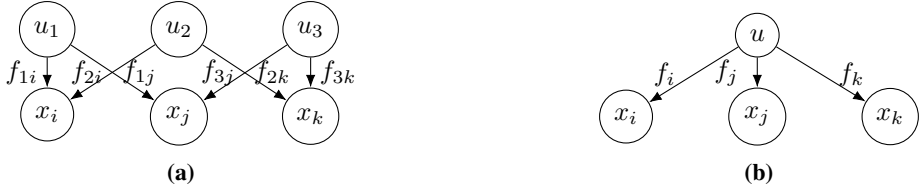
\begin{figure}[h]
    \centering
    \begin{subfigure}[b]{0.45\textwidth}
    \centering
    \begin{tikzpicture}
        \node[state] (1) {$u_1$};
        \node[state] (2) [right =of 1] {$u_2$};
        \node[state] (3) [right =of 2] {$u_3$};
        \node[state] (4) [below =of 1] {$x_i$};
        \node[state] (5) [below =of 2] {$x_j$};
        \node[state] (6) [below =of 3] {$x_k$};
    
        \path (1) edge node[left] {$f_{1i}$} (4);
        \path (1) edge node[right] {$f_{1j}$} (5);
        \path (2) edge node[left] {$f_{2i}$} (4);
        \path (2) edge node[right] {$f_{2k}$} (6);
        \path (3) edge node[left] {$f_{3j}$} (5);
        \path (3) edge node[right] {$f_{3k}$} (6);
    \end{tikzpicture}
    \caption{}
    \label{fig:triple_confounded1}
    \end{subfigure}
    \hfill
    \begin{subfigure}[b]{0.45\textwidth}
    \centering
    \begin{tikzpicture}
        \node[state] (1) {$x_i$};
        \node[state] (2) [right =of 1] {$x_j$};
        \node[state] (3) [right =of 2] {$x_k$};
        \node[state] (4) [above =of 2] {$u$};
        
        \path[state] (4) edge node[left] {$f_{i}$} (1);
        \path[state] (4) edge node[left] {$f_{j}$} (2);
        \path[state] (4) edge node[right] {$f_{k}$} (3);
        
    \end{tikzpicture}
    \caption{}
    \label{fig:triple_confounded2}
    \end{subfigure}
    \caption{Two possible latent structures that confound each pair of variables $x_i$, $x_j$ and $x_k$.}
\end{figure}

\section{Optimisation Details}
\label{app:experiments}

\subsection{Optimisation Details for \modelname}
As discussed in \Cref{sec:\modelname}, we gradually increase the prior hyperparameters $\rho$ and $\alpha$ throughout training. This is done using the augmented Lagrangian procedure for optimisation \citep{}. The optimisation process interleaves two steps: 1) optimise the objective for fixed values of $\rho$ and $\alpha$ for a certain number of steps; and ii) update the values of $\rho$ and $\alpha$. Steps i) and ii) and ran until convergence, or the maximum number of optimisation steps is reached. We describe these two steps in more detail below.

\paragraph{Step i).} The objective is optimised for some fixed vales of $\rho$ and $\alpha$ using Adam \citep{}. We use a learning rate of $10^{-3}$ for the model parameters and $5\times 10^{-3}$ for the variational parameters. We optimise the objective for a maxmimum of 5000 steps or until convergence (we stop early if the loss does not improve for 1500 optimisation steps, moving to step ii)). During training, we reduce the learning rate by a factor of 10 if the training loss does not improve for 1000 steps a maximum of two times. If we reach the condition a third time, we assume optimisation has converged and move to step ii). We apply annealing to the KL-divergence between the approximate posterior and prior over the latent variables. The annealing contant is fixed for each step i), and increased linearly over the first optimisation loops.

\paragraph{Step ii).} We initialise $\rho = 1$ and $\alpha = 0$. At the beginning of step i) we measure the DAG / bow-free penalty $P_1 = \Exp{q_{\phi}(G)}{h(G)}$. At the beginning of step ii), we measure this penalty again, $P_2 = \Exp{q_{\phi}(G)}{h(G)}$. If $P_2 < P_1$, we leave $\rho$ unchanged and update $\alpha \leftarrow \alpha + \rho P_2$. Otherwise, if $P_2 \geq 0.65 P_1$, we leave $\alpha$ unchanged and update $\rho \leftarrow 10 \rho$. We repeat the steps i) to ii) a maximum of 30 times or until convergence (measured as $\alpha$ or $\rho$ reaching some max value which we set to $10^3$ for both).

\subsection{Additional Hyperparameters}
\paragraph{Prior Hyperparameters.} We use the sparsity inducing prior hyperparameters $\lambda_{s, 1} = \lambda_{s, 2} = 5$. 

\paragraph{ELBO approximation.} We construct an approximation to the ELBO in \Cref{eq:elbo} using a single sample from the approximate posteriors. For evaluating the gradients of the ELBO we use the Gumbel softmax method with a hard forward pass and soft backward pass with temperature of 0.25.

\paragraph{Neural network architectures.} The functions $\xi_1$, $\xi_2$ and $\ell$ used in the likelihood and the inference network used to parameterise $q_{\phi}(\bfu | \bfx)$ are all two hidden layer MLPs with 80 hidden units per hidden layer. 

\paragraph{Non-Gaussian noise model.}
For the non-Gaussian noise model 

\paragraph{ATE estimation.} For ATE estimation we compute expectations by drawing 1000 graphs from the learnt posterior, and for each graph we draw two samples of $\bfx_Y$ for a total of 2000 samples which we used to form a Monte Carlo estimate.

\section{Dataset Details}
\label{app:dataset_details}
\subsection{Synthetic Fork-Collider Dataset}
We constructed a 2000 sample synthetic dataset with the causal structure shown in \Cref{fig:csuite_causal_graph} by sampling from the following SEM:
\begin{equation}
    \begin{aligned}
    [u_1,\ u_2,\ \epsilon_1,\ \epsilon_2,\ \epsilon_3,\ \epsilon_4,\ \epsilon_5]^T &\sim \shortNormal{\*0}{\bfI} \\
    x_1 &= \epsilon_1 \\
    x_2 &= \sqrt{6} \exp(-u_1^2) + 0.1 \epsilon_2 \\
    x_3 &= \sqrt{6} \exp(-u_1^2) + \sqrt{6} \exp(-u_2^2) + 0.2 \epsilon_3 \\
    x_4 &= \sqrt{6} \exp(-u_2^2) + \sqrt{6} \exp(-x_1^2) + 0.1 \epsilon_4 \\
    x_5 &= \sqrt{6} \exp(-x_1^2) + 0.1 \epsilon_5.
    \end{aligned}
\end{equation}
Variables $u_1$ and $u_2$ are latent confounders acting on variable pairs $x_2$ and $x_3$, and $x_3$ and $x_4$, respectively.

\subsection{Latent confounded ER Dataset} \label{app: er}

We generate synthetic datasets from ADMG extension of \textit{Erd\H{o}s-R\'enyi} (ER) graph model \citep{lachapelle2019gradient, zheng2020learning}. An \textbf{ER}$(d, e, m)$ dataset is generated according the following procedures:
\begin{enumerate}[leftmargin=*]
    \item Generate a $d \times d$ directed adjacency matrix $G_D$ of a ADMG from \textit{Erd\H{o}s-R\'enyi} (ER) graph model, whose expected directed edges equal to $e$; 
    \item Simulate $d \times d$ bidirected adjacency matrix $G_B$ via random Bernoulli sampling, whose expected number equals to $m$;
    \item Simulate exogenous noises $\epsilon_i$ from a zero mean Gaussian distribution with standard deviation of 0.1;
    \item Simulate latent variables $\bfu$ from a zero mean Gaussian distribution with standard deviation of 0.1;
    \item Simulate each observed variable as $x_i = f_i(\bfx_{\text{pa}(i; G_D)}) + g_i( \bfu_{\text{pa}(i; G_B)}) + \epsilon_i$, where $f_i, g_i$ are randomly sampled nonlinear functions of the form:
    $y = \bfw^T\ e^{-\bfx^2}$.
    \item Remove $\bfu$ from the sampled dataset.
\end{enumerate}

\subsection{IHDP Dataset Details}
\label{app: ihdp}
This dataset contains measurements of both infants (birth weight, head circumference, etc.) and their mother (smoked cigarettes, drank alcohol, took drugs, etc) during real-life data collected in a randomised experiment. The main task is to estimate the effect of home visits by specialists on future cognitive test scores of infants. The outcomes of treatments are simulated artificially as in \cite{hill2011bayesian}; hence the outcomes of both treatments (home visits or not) on each subject are known. Note that for each subject, our models are only exposed to only one of the treatments; the outcomes of the other potential/counterfactual outcomes are hidden from the mode, and are only used for the purpose of ATE evaluation. To make the task more challenging, additional confoundings are manually introduced by removing a subset (non-white mothers) of the treated children population. In this way we can construct the IHDP dataset of 747 individuals with 6 continuous covariates and 19 binary covariates. We use 10 replicates of different simulations based on setting B (log-linear response surfaces) of \cite{hill2011bayesian}, which can downloaded from \url{https://github.com/AMLab-Amsterdam/CEVAE}. We use a 70\%/30\% train-test split ratio. Before training our models, all continuous covariates are normalised.

\section{Run Time Comparison}
Here, we compare the run time of N-BF-ADMG and CAM-UV on a synthetically generated \textbf{ER}(12, 50, 10) dataset. The results are shown in \Cref{fig:runtime_results}. N-BF-ADMG is trained using 30k epochs to ensure convergence, whose run time can be further reduced via early stopping etc. The results in \Cref{fig:runtime_results} highlight that methods based on continuous optimisation (N-BF-ADMG) offer a significant improvement in run time relative to methods based on conditional independency tests (CAM-UV) for large datasets. In combination with the results in the main paper, this shows the empirical validity of deep learning approach for identifying ADMGs that scales to larger dataset, without sacrificing accuracy. 

\begin{figure}
    \centering
    \includegraphics[width=\textwidth]{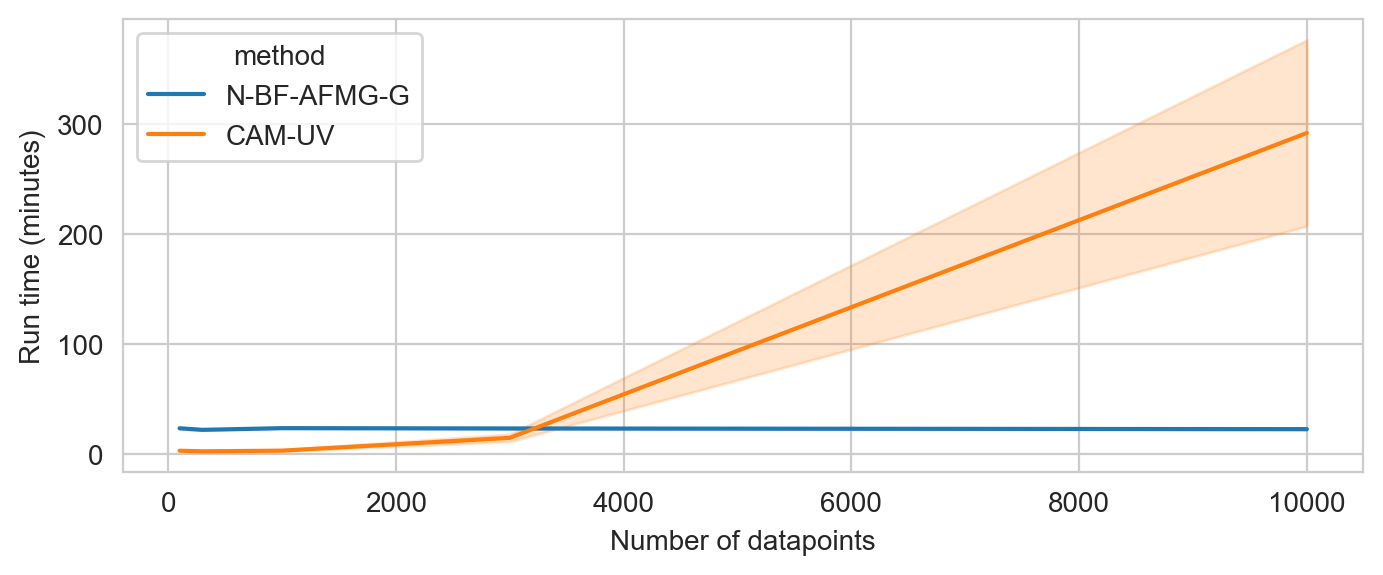}
    \caption{Run time results for N-BF-ADMG and CAM-UV on a 12 variable synthetic ER dataset. The figure shows mean results $\pm$ standard deviation across five randomly generated datasets.}
    \label{fig:runtime_results}
\end{figure}

\end{document}